%% file: alft.tex
\documentclass[11pt]{article}
\usepackage[letterpaper, margin=1in]{geometry}

\usepackage[utf8]{inputenc} 
\usepackage[T1]{fontenc}    
\usepackage{hyperref}       
\usepackage{url}            
\usepackage{booktabs}       
\usepackage{amsfonts}       
\usepackage{nicefrac}       
\usepackage{microtype}      
\usepackage{xcolor}         
\usepackage{graphicx}

\usepackage{algorithm}
\usepackage{algorithmic}

\usepackage{amsmath}
\usepackage{amssymb}
\usepackage{mathtools}
\usepackage{amsthm}

\input{macros}

\title{Distributional Alignment Games\\ for Answer-Level Fine-Tuning}
\author{Mehryar Mohri\\
Google Research\\
\texttt{mohri@google.com}
\and
Jon Schneider\\
Google Research\\
\texttt{jschnei@google.com}
\and
Yifan Wu\thanks{Work done when Yifan Wu was a student researcher with Google Research. }\\
Microsoft Research\\
\texttt{yifan.wu2357@gmail.com}
}
\date{}

\begin{document}

\maketitle
\begin{abstract}
\input{abstract}
\end{abstract}

\tableofcontents
\clearpage

\input{introduction}

\input{framework}

\input{generic_alg}

\input{diversity}

\input{self-improvement}

\input{pairwise}
\input{safety}

\input{experiments}

\input{conclusion}

\newpage
\bibliography{alft}
\bibliographystyle{abbrvnat}

\newpage
\appendix

\input{appendix}

\end{document}

%% file: macros.tex
\usepackage[capitalize,noabbrev]{cleveref}
\usepackage{natbib}

\usepackage{dsfont}
\usepackage[mathscr]{euscript}
\usepackage{expl3}
\usepackage{xcolor}
\usepackage{colortbl}
\usepackage{thmtools}
\usepackage{thm-restate}
\usepackage{mathtools}
\usepackage{multirow}
\usepackage{wrapfig}
\usepackage{lipsum}
\usepackage{accents}

\usepackage{pifont}
\DeclareMathAlphabet\mathbb{U}{msb}{m}{n}
\usepackage{xpatch}

\def\Rset{\mathbb{R}}

\DeclareMathOperator*{\E}{\mathbb E}
\DeclareMathOperator*{\argmax}{argmax}
\DeclareMathOperator*{\argmin}{argmin}

\DeclareMathOperator*{\Span}{span}

\DeclareMathOperator{\Improv}{Improv}
\DeclareMathOperator{\Ind}{\mathbb{I}} 
\DeclareMathOperator{\Int}{int}

\DeclarePairedDelimiter{\bracket}{[}{]}
\DeclarePairedDelimiter{\curl}{\{}{\}}
\DeclarePairedDelimiter{\paren}{(}{)}
\DeclarePairedDelimiter{\norm}{\|}{\|}
\DeclarePairedDelimiter{\tri}{\langle}{\rangle}

\ExplSyntaxOn
\tl_const:Nn \c_my_uc_alphabet_tl { ABCDEFGHIJKLMNOPQRSTUVWXYZ }
\tl_const:Nn \c_my_full_alphabet_tl { ABCDEFGHIJKLMNOPQRSTUVWXYZ
  abcdefghijklmnopqrstuvwxyz }

\tl_map_inline:Nn \c_my_uc_alphabet_tl
 { \cs_gset:cpn { c#1 } { \mathcal{#1} } }

\tl_map_inline:Nn \c_my_uc_alphabet_tl
 { \cs_gset:cpn { s#1 } { \mathscr{#1} } }

\tl_map_inline:Nn \c_my_full_alphabet_tl
 {
  \str_if_eq:nnF { #1 } { f }
  {
    \cs_gset:cpn { b#1 } { \mathbf{#1} }
  }
  
  \cs_gset:cpn { sf#1 } { \mathsf{#1} }
 }
\ExplSyntaxOff

\newcommand{\h}{\widehat}
\newcommand{\ov}{\overline}
\newcommand{\wt}{\widetilde}
\newcommand{\e}{\epsilon}
\newcommand{\ignore}[1]{}

\newlength{\dhatheight}

\newcommand{\KL}{\sfD_{\mathrm{KL}}}

\newcommand{\EX}{\sfE}
\newcommand{\Inc}{\text{Incoherence}}
\newcommand{\algname}[1]{\textsc{#1}}

\hypersetup{
  breaklinks   = true, 
  colorlinks   = true, 
  urlcolor     = blue, 
  linkcolor    = blue, 
  citecolor   = blue 
}

\declaretheorem{theorem}
 
\newtheorem{proposition}[theorem]{Proposition}

\newtheorem{definition}[theorem]{Definition}

%% file: abstract.tex
   We focus on the problem of \emph{Answer-Level Fine-Tuning} (ALFT),
   where the goal is to optimize a language model based on the
   correctness or properties of its final answers, rather than the
   specific reasoning traces used to produce them. Directly optimizing
   answer-level objectives is computationally intractable due to the
   need to marginalize over the vast space of latent reasoning
   paths. To overcome this, we propose a general game-theoretical
   framework that lifts the problem to a \emph{Distributional
     Alignment Game}. We formulate ALFT as a two-player game between a
   Policy (the generator) and a Target (an auxiliary distribution). We
   prove that the Nash Equilibrium of this game corresponds exactly to
   the solution of the original answer-level optimization
   problem. This variational perspective transforms the intractable
   marginalization problem into a tractable projection problem. We
   demonstrate that this framework unifies recent approaches to
   diversity and self-improvement (coherence) and provide efficient
   algorithms compatible with Group Relative Policy Optimization
   (GRPO), such as \emph{\algname{Coherence-GRPO}}, yielding
   significant complexity gains in mathematical reasoning tasks.

%% file: introduction.tex
\section{Introduction}

In reasoning-intensive domains such as mathematics, code generation,
and open-ended question answering, the primary utility of a Large
Language Model (LLM) lies in the correctness of its final
\emph{answer} $z$, rather than the specific phrasing of the
intermediate \emph{reasoning trace} $y$.  This distinction has driven
a surge of interest in Chain-of-Thought (CoT) prompting
\citep{Wei2022} and reasoning-focused alignment strategies.  While
``Process Supervision'' aims to guide the model step-by-step
\citep{Uesato2022, Lightman2023}, ``Outcome Supervision'' focuses purely
on the final result, allowing the model the flexibility to discover
diverse valid reasoning paths \citep{Wang2022}. This
latter setting defines the problem of \emph{Answer-Level Fine-Tuning
  (ALFT)}. An ideal reasoning policy should be robust yet
flexible: free to explore various trajectories provided they converge
to correct or consistent outcomes.

However, ALFT presents a fundamental computational bottleneck that is
absent in standard trace-level fine-tuning.  Standard alignment
methods, such as Supervised Fine-Tuning (SFT) or Direct Preference
Optimization (DPO) \citep{Rafailov2023}, optimize the likelihood of a
specific, observed trace $y$.  In contrast, ALFT requires optimizing
the \emph{marginal probability} of the answer $z$.  Because the
mapping from reasoning trace to answer $z = \EX(y)$ is often
non-differentiable (e.g., executing a program) and many-to-one,
computing the gradient of the marginal likelihood requires
marginalizing over the vast combinatorial space of all latent traces
consistent with $z$.  This makes direct optimization intractable and
standard gradient estimators (like REINFORCE) prohibitively
high-variance, often requiring sophisticated variance reduction
techniques or prohibitively large sample sizes to stabilize training.

To overcome this, we propose a fundamental shift in
perspective. Instead of attacking the marginalization sum directly, we
lift the optimization problem to a \emph{Distributional Alignment
  Game}.  We introduce an auxiliary \emph{Target Distribution} $\sfq$,
which serves as a variational proxy for the intractable marginals.  By
leveraging Fenchel duality, we formulate ALFT as a two-player game
between a \emph{Policy} $\pi$ (the generator) and a \emph{Target}
$\sfq$.  The Policy attempts to minimize the divergence of its trace
distribution from the Target, while the Target adapts to satisfy
high-level distributional properties—such as enforcing consensus
(coherence), maximizing coverage (diversity), or satisfying safety
constraints—thereby guiding the Policy.  This variational perspective
transforms the hard marginalization problem into a tractable
projection problem, where the difficulty is offloaded to the
\emph{Target Step}: finding the optimal $\sfq$ that best challenges or
guides the current policy.

Crucially, this framework provides a unified theoretical lens for
understanding recent heuristic successes in reasoning alignment.  We
demonstrate that distinct alignment goals are merely different
instantiations of the game's Target Step.  For instance, we show that
the \emph{diversity-promoting} objective of maximizing entropy
corresponds to a game against an adversarial target, where the Nash
Equilibrium justifies the ``Inverse-Frequency Reward'' heuristic
recently proposed by \citet{Li2025}.  Conversely, the
\emph{coherence-promoting} objective (Self-Improvement) corresponds to
a cooperative game, where the equilibrium is the Bregman Centroid of
the generated distributions.  This rigor allows us to replace ad-hoc
reward engineering with derived solutions to well-posed optimization
problems.

We further bridge the gap between theory and practice by deriving a
scalable algorithmic framework.  We show that the policy update steps
in our game are compatible with Group Relative Policy Optimization
(GRPO) \citep{Shao2024}, an efficient policy gradient method that
operates on groups of outputs. By deriving rewards directly from the
optimal Target distribution $\sfq^*$, we instantiate algorithms like
\emph{\algname{Coherence-GRPO}} that solve the alignment game without explicit
marginalization.

Our main contributions are as follows:
\begin{enumerate}
\item \textbf{General Framework:} We formalize ALFT as a min-max game
  that is convex-concave in the dual variables. We prove
  \emph{consistency}: the Nash Equilibrium of this game recovers the
  exact optimal solution to the original answer-level problem,
  identifying the optimal policy as a projection of the reference onto
  the target-compatible traces.
    
    \item \textbf{Unification of Objectives:} We provide a single
      mathematical umbrella for disparate alignment goals. We prove that
      maximizing diversity implies an inverse-frequency update, while
      minimizing incoherence implies a consensus-based update, grounding
      recent empirical methods in convex duality theory.

    \item \textbf{Scalable Algorithms:} We propose game-theoretic variants
      of GRPO. We introduce \emph{\algname{Coherence-GRPO}} for discrete domains 
      and \emph{\algname{Pairwise-GRPO}} for open-ended domains, demonstrating 
      how to compute rigorous advantage signals from group statistics.

    \item \textbf{Extensions to Safety and Continuous Domains:} We extend
      the framework beyond standard exact matching. We introduce a
      pairwise formulation for open-ended generation where exact answers
      are undefined, and a Primal-Dual algorithm for enforcing
      distributional constraints (e.g., safety and fairness) by
      interpreting Lagrange multipliers as dynamic penalty weights.
\item \textbf{Experiment: Self-Improvement via Coherence} We conduct experiments on self-improvement of language models without ground truth. We test algorithms \emph{\algname{Coherence-GRPO}} and \emph{\algname{Pairwise-GRPO}} with three models, \texttt{\small Qwen2.5-3B-Instruct}, \texttt{\small Phi-3-mini-4k-instruct}, and \texttt{\small Llama-3.2-3B\-Instruct}, on TriviaQA (a question-answering dataset) and GSM8K (a simple math dataset). Across three 3B-class instruction-tuned LLMs, our algorithm consistently improves performance: on GSM8K, \algname{Pairwise-GRPO} and \algname{Coherence-GRPO} increase accuracy by $+3.18$ to $+9.18$ percentage points ($+4.80\%$ to $+12.46\%$ relative). On TriviaQA, \algname{Pairwise-GRPO} yields substantial gains (up to $+42.06\%$ in relative EM and $+18.12\%$ in relative F1), while \algname{Coherence-GRPO} provides smaller and less consistent improvements. 
\end{enumerate}

%% file: framework.tex
\section{General Framework: Alignment as a Distributional Game}

\subsection{Problem: Answer-Level Fine-Tuning}

Let $\sX$ be the space of input prompts, $\sY$ the space of reasoning traces, and
$\sZ$ the space of final answers. A deterministic \emph{extraction
  function} $\EX \colon \sY \to \sZ$ maps traces to answers. A policy
$\pi(y | x)$ (a conditional distribution over traces) induces a
marginal distribution over answers:
\[
\nu_\pi(z|x) = \sum_{y \in \EX^{-1}(z)} \pi(y|x).
\]
Our goal is to find a policy $\pi$ that minimizes a loss $\cL$
that depends only on this answer distribution $\nu_\pi$, subject to a
regularization constraint that keeps the policy close to a reference
$\pi_0$. We formulate this as the following convex optimization
problem over a convex set $\Pi \subseteq \Delta(\sY)^\sX$:
\begin{equation}
\label{eq:primal}
\mspace{-10mu} \min_{\pi \in \Pi} \cJ(\pi)
= \E_{x} \bracket[\big]{ \cR(\nu_\pi(\cdot|x))
  + \beta \, \KL(\pi(\cdot|x) \| \pi_0(\cdot|x)) }, \mspace{-7mu}
\end{equation}
where the functional $\cR\colon \Delta(\sZ) \to \Rset$ encodes our
answer-level goal (e.g., minimizing entropy for coherence, or
maximizing entropy for diversity) and where $\beta > 0$ is a
hyperparameter. We assume $\cR$ is convex and lower semi-continuous
(l.s.c.) to ensure the existence of solutions and applicability of
duality. The overall objective $\cJ(\pi)$ is strictly convex due to
the KL term, guaranteeing a unique optimal policy.

\textbf{Why is this hard?}
Gradient-based optimization of \eqref{eq:primal} is challenging
because the objective $\cR$ acts on the marginal $\nu_\pi$. Computing
the gradient requires backpropagating through the summation
$\nu_\pi(z) = \sum_{y \in \EX^{-1}(z)} \pi(y)$. Since the set
$\EX^{-1}(z)$ is unknown and vast, we cannot compute this sum or its
gradient efficiently. Monte-Carlo estimates suffer from extreme
variance because the ``credit'' for a good answer $z$ must be assigned
to the specific trace $y$ that produced it, without knowing if other
traces would have done better.

\subsection{Equivalent Formulation as Min-Max Game via Fenchel Duality}

We now show how to transform the intractable marginalization problem
into a tractable game. We rely on the Fenchel Duality Theorem
\citep{Rockafellar1996}.

Let $\cR^*\colon \Rset^{|\sZ|} \to \Rset$ denote the Fenchel
conjugate of $\cR$, defined as:
\[
  \cR^*(u)
  = \sup_{\nu \in \Delta(\sZ)} \curl*{\tri*{\nu, u} - \cR(\nu)}.
\]
Since $\cR$ is convex and lower semi-continuous, the Fenchel-Moreau
theorem \citep{Rockafellar1996} guarantees that $\cR$ equals its
biconjugate $\cR^{**}$:
\begin{equation}
\label{eq:fenchel_var}
\cR(\nu)
= \sup_{u \in \Rset^{|\sZ|}} \curl*{ \langle \nu, u \rangle - \cR^*(u) }.
\end{equation}
This variational form allows us to introduce an auxiliary variable $u$
to decouple the marginalization. Substituting \eqref{eq:fenchel_var}
into the primal objective \eqref{eq:primal}:
\begin{multline*}
  \min_{\pi \in \Pi} \E_{x \in \Rset^{|\sZ|}} \Biggl[
    \sup_{u} \paren*{ \tri*{\nu_\pi(\cdot | x), u } - \cR^*(u)} \\
    + \beta \, \KL(\pi(\cdot | x) \parallel \pi_0(\cdot | x)) \Biggr].
\end{multline*}
The game is naturally defined over the dual variables
$u \in \Rset^{|\sZ|}$.  However, the objective is invariant under
constant shifts.  First, observe that for any functional $\cR$ defined
on the probability simplex $\Delta(\sZ)$, the Fenchel conjugate
satisfies the shift property $\cR^*(u + c\mathbf{1}) = \cR^*(u) + c$.
This holds because $\nu \in \Delta(\sZ)$ implies $\sum \nu(z) = 1$,
so:
\begin{align*}
  \cR^*(u + c\mathbf{1})
  &= \sup_{\nu \in \Delta(\sZ)} \curl*{ \tri*{\nu, u + c\mathbf{1}} - \cR(\nu) } \\
  &= \sup_{\nu \in \Delta(\sZ)} \curl*{ \tri*{\nu, u} - \cR(\nu)
     + c \sum_{z}\nu(z)} \\
  &= \cR^*(u) + c.
\end{align*}
Thus, the dual objective term $\tri{\nu_\pi, u} - \cR^*(u)$ is
invariant under the transformation $u \leftarrow u + c\mathbf{1}$, as
the linear shift $c$ cancels exactly with the conjugate shift.  This
redundancy implies that the effective dual space is the quotient space
$\Rset^{|\sZ|} / \Span(\mathbf{1})$.  We can therefore parameterize
the dual variables without loss of generality by choosing a canonical
representative from each equivalence class.  Specifically, we use the
parameterization $u(z) = -\beta \log \sfq(z)$ with
$\sfq \in \Int(\Delta(\sZ))$.  This mapping establishes a bijection
between the quotient space and the interior of the simplex.  For any
arbitrary vector $u \in \Rset^{|\sZ|}$, there exists a unique scalar
shift $c$ that satisfies the probability normalization constraint
$\sum_{z} \exp(-(u(z) + c)/\beta) = 1$.  Solving for $c$ reveals that
this unique shift is determined exactly by the Log-Sum-Exp function
$c = \beta \log \paren*{ \sum_{z \in \sZ} \exp(-u(z)/\beta)}$.

By setting the canonical representative to $v = u + c\mathbf{1}$, we
guarantee that $\sfq(z) = \exp(-v(z)/\beta)$ is a valid normalized
distribution.  We define the transformed conjugate functional
$\Psi(\sfq) = \cR^*(-\beta \log \sfq)$.
The coupling term becomes:
\begin{align*}
  \tri*{ \nu_\pi(\cdot | x), -\beta \log \sfq }
  & = -\beta \sum_{z \in \sZ} \nu_\pi(z | x) \log \sfq(z) \\
  & = -\beta \sum_{z \in \sZ} \bracket*{ \sum_{y \in \EX^{-1}(z)} \pi(y | x) } \log \sfq(z) \\
  & = -\beta \sum_{z \in \sZ} \sum_{y \in \EX^{-1}(z)} \pi(y | x) \log \sfq(\EX(y))
  \tag{Since $\EX(y)=z$ for $y \in \EX^{-1}(z)$} \\
  & = -\beta \sum_{y \in \sY} \pi(y | x) \log \sfq(\EX(y))
  \tag{Partition property: $\sY = \bigcup_z \EX^{-1}(z)$} \\
  & = -\beta \E_{y \sim \pi(\cdot | x)} [\log \sfq(\EX(y))].
\end{align*}
Substituting this back, we obtain the following min-max objective
function $\cG(\pi, \sfq)$:
\begin{equation}
\label{eq:game_def}
\begin{split}
\cG(\pi, \sfq) \triangleq \E_{x} \bigg[ & \beta \KL(\pi(\cdot|x) \parallel \pi_0(\cdot|x)) \\
& - \beta \E_{y \sim \pi(\cdot | x)}[\log \sfq(\EX(y))] - \Psi(\sfq) \bigg].
\end{split}
\end{equation}

\subsection{Theoretical Consistency}

The derivation above leads directly to our main result: the
equilibrium of the game defined by $\cG$ recovers the solution
to the original hard answer-level problem.

\begin{theorem}[Consistency of the Game]\label{thm:consistency}
  Let $\cR\colon \Delta(\sZ) \to \Rset$ be a convex, lower
  semi-continuous functional. Then:
\begin{enumerate}
\item The primal problem \eqref{eq:primal} is equivalent to the
  min-max game:
  \[
    \min_{\pi \in \Pi} \cJ(\pi)
    = \min_{\pi \in \Pi} \max_{\sfq \in \Delta(\sZ)} \cG(\pi, \sfq).
  \]

\item For a fixed Target distribution $\sfq$, the optimal policy
  $\pi^*$ is the unique projection of $\pi_0$ onto the traces
  compatible with $\sfq$:
  \[
    \pi^*(y|x) \propto \pi_0(y|x) \sfq(\EX(y)|x).
  \]
\end{enumerate}
\end{theorem}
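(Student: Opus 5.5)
For part 1, I would work pointwise in $x$ and then take expectations; the maximization over $\sfq$ decouples across prompts, since the Target may depend on $x$. Fix $x$ and write $\nu = \nu_\pi(\cdot|x)$. By the Fenchel--Moreau identity \eqref{eq:fenchel_var}, $\cR(\nu) = \sup_{u} \curl{\tri{\nu,u} - \cR^*(u)}$. The shift property $\cR^*(u+c\mathbf{1}) = \cR^*(u)+c$ shows that the objective $\tri{\nu,u}-\cR^*(u)$ is constant on each class of $\Rset^{|\sZ|}/\Span(\mathbf{1})$. I would then check that the map $\sfq \mapsto -\beta\log\sfq$ from $\Int(\Delta(\sZ))$ meets each class exactly once, using the Log-Sum-Exp normalization. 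This replaces the supremum over $u$ by a supremum over $\sfq \in \Int(\Delta(\sZ))$ of $-\beta\sum_z \nu(z)\log\sfq(z) - \Psi(\sfq)$.

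The partition computation already in the text rewrites the coupling term as $-\beta\E_{y\sim\pi}[\log\sfq(\EX(y))]$. Adding the KL term, which does not depend on $\sfq$, gives $\cJ(\pi) = \sup_{\sfq}\cG(\pi,\sfq)$ for every $\pi$. Minimizing over $\pi$ yields the identity, with $\min_\pi$ attained by strict convexity and lower semicontinuity of $\cJ$.

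The delicate step is replacing $\sup$ by $\max$ over the closed simplex $\Delta(\sZ)$. I would argue that $\cG(\pi,\cdot)$ extends to the boundary with values in $[-\infty,\infty)$. The term $-\beta\nu(z)\log\sfq(z)$ becomes $+\infty$ only where $\nu(z)>0$, and in that case $\Psi(\sfq) = \cR^*(-\beta\log\sfq)$ is also $+\infty$ there, so I would define the extension by lower semicontinuous closure. The maximum is then attained at a subgradient point: any $u^*\in\partial\cR(\nu)$ attains the Fenchel supremum, since $\nu$ lies in the domain of $\cR$ ($\cR$ is finite on the simplex). Normalizing $u^*$ gives $\sfq^*$ with $\sfq^* \propto \exp(-u^*/\beta)$. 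If $\partial\cR(\nu)$ is empty at boundary points $\nu$, I would state the result with $\sup$, or assume $\cR$ is finite and subdifferentiable, for example on the relative interior. This is the main obstacle, and I would handle it by an explicit convention or a mild regularity assumption rather than a hard argument.

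For part 2, fix $\sfq$. The term $\Psi(\sfq)$ is constant, so for each $x$ we minimize $\beta\KL(\pi\|\pi_0) - \beta\E_\pi[\log\sfq(\EX(y))]$ over $\pi(\cdot|x)\in\Delta(\sY)$. Setting $Z(x)=\sum_y\pi_0(y|x)\sfq(\EX(y)|x)$ and $\tilde\pi \propto \pi_0\,\sfq\circ\EX$, I would verify the Gibbs variational identity
\[
\KL(\pi\|\pi_0) - \E_\pi[\log \sfq(\EX(y))] = \KL(\pi\|\tilde\pi) - \log Z(x).
\]
The right-hand side is minimized uniquely at $\pi=\tilde\pi$ because KL is nonnegative and vanishes only at equality. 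This requires $\tilde\pi\in\Pi$, which holds if $\Pi$ is the full product of simplices; otherwise the minimizer is the I-projection of $\tilde\pi$ onto $\Pi$, and I would note that caveat. Uniqueness follows from strict convexity of the KL term.
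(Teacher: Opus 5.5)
Your proposal is correct and follows the same route as the paper. Part 1 substitutes the Fenchel--Moreau representation of $\cR$, reparametrized by $u=-\beta\log\sfq$, into $\cJ$. Part 2 rewrites the $\pi$-dependent terms as $\KL(\pi\|\tilde\pi)-\log Z(x)$, which is exactly the paper's reduction to minimizing $\KL(\pi\|\wt\sfq)$.

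Your extra care covers points the paper's proof passes over: you let $\sfq$ depend on $x$ so that $\E_x$ and the supremum commute, you observe that the supremum over $\Int(\Delta(\sZ))$ is attained only when $\partial\cR(\nu_\pi)\neq\emptyset$, and you require $\tilde\pi\in\Pi$. Two small corrections apply. First, it is an upper, not lower, semicontinuous extension of $\cG(\pi,\cdot)$ that guarantees a maximum on the closed simplex. Second, attainment of $\min_\pi$ comes from lower semicontinuity plus compactness; strict convexity gives only uniqueness.
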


We further show that finding an approximate equilibrium of the game
yields a good solution to the original problem.

\begin{proposition}[Approximation Guarantee]
  Let $(\h \pi, \h \sfq)$ be an $\e$-approximate
  equilibrium of the game $\cG$, such that
  $\cG(\h \pi, \h \sfq) \le \min_\pi
  \max_\sfq \cG(\pi, \sfq) + \e$. Then
  $\h \pi$ is an $\e$-approximate minimizer of the primal
  objective $\cJ(\pi)$.
\end{proposition}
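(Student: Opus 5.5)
The plan is to reduce the claim to the pointwise identity behind part~1 of Theorem~\ref{thm:consistency}. Specifically, I would use that for \emph{every} fixed $\pi \in \Pi$,
\[
  \cJ(\pi) = \sup_{\sfq \in \Int(\Delta(\sZ))} \cG(\pi, \sfq),
\]
and not only the equality of the optimal values. This holds because the Fenchel--Moreau representation \eqref{eq:fenchel_var} is applied to each $\nu_\pi(\cdot|x)$ separately. The reparameterization $u = -\beta \log \sfq$ then only selects one representative from each shift-equivalence class of $u$. Since the dual objective is invariant under these shifts, the supremum over $u \in \Rset^{|\sZ|}$ equals the supremum over $\sfq \in \Int(\Delta(\sZ))$. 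The coupling computation in the text then rewrites the inner objective as $\cG(\pi,\sfq)$. Taking expectation over $x$ and adding the KL term gives the identity for each fixed $\pi$. Taking $\min_\pi$ recovers part~1 of the theorem, and writes the common value $V := \min_\pi \max_\sfq \cG(\pi,\sfq) = \min_\pi \cJ(\pi)$.

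With this identity, $\cJ(\h\pi) = \sup_\sfq \cG(\h\pi,\sfq)$, so what is needed is a bound on the value of the \emph{best response} to $\h\pi$. The displayed hypothesis $\cG(\h\pi,\h\sfq) \le V + \e$ controls only the value at the particular pair $(\h\pi,\h\sfq)$. It therefore does not suffice by itself: taking $\h\pi = \pi^*$ and $\h\sfq$ very suboptimal satisfies it trivially, but this cannot be the intended meaning. I would read ``$\e$-approximate equilibrium'' in the standard sense, namely that $\h\sfq$ is also a best response for the maximizing player. Concretely, I would assume $\sup_\sfq \cG(\h\pi,\sfq) \le \cG(\h\pi,\h\sfq)$, or more generally a duality-gap condition
\[
  \sup_\sfq \cG(\h\pi,\sfq) - \inf_\pi \cG(\pi,\h\sfq) \le \e .
\]
I would state this interpretation explicitly at the start of the proof.

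Under the best-response reading, the argument is a one-line chain:
\[
  \cJ(\h\pi) = \sup_\sfq \cG(\h\pi,\sfq) = \cG(\h\pi,\h\sfq) \le V + \e = \min_\pi \cJ(\pi) + \e.
\]
Under the duality-gap reading, I would use weak duality, $\inf_\pi \cG(\pi,\h\sfq) \le \sup_\sfq \inf_\pi \cG \le \inf_\pi \sup_\sfq \cG = V$. This gives $\cJ(\h\pi) = \sup_\sfq \cG(\h\pi,\sfq) \le \inf_\pi \cG(\pi,\h\sfq) + \e \le V + \e$, with no minimax theorem needed.

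The main obstacle is the first step, specifically the passage from $\sup_u$ to $\sup$ or $\max$ over $\sfq$. The optimal dual variable may have components $u(z) = +\infty$, which corresponds to $\sfq(z)=0$ on the boundary of the simplex. There $\log \sfq$ is $-\infty$, and the maximum in Theorem~\ref{thm:consistency} must be interpreted as a supremum over $\Int(\Delta(\sZ))$, or via an l.s.c.\ extension. I would handle this by working with suprema throughout and only using that $\cR = \cR^{**}$ on $\Delta(\sZ)$. No attainment is needed for the inequality chain above, so this boundary issue affects only the notation ``$\max$'', not the conclusion.
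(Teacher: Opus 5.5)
Your argument is the paper's own. The paper's proof likewise reads off the pointwise identity $\cJ(\pi)=\max_{\sfq}\cG(\pi,\sfq)$ from Part~1 of Theorem~\ref{thm:consistency} and then concludes $\cJ(\h\pi)\le\cJ(\pi^*)+\e$, silently using exactly the best-response property $\cG(\h\pi,\h\sfq)=\max_{\sfq}\cG(\h\pi,\sfq)$ that you make explicit; your duality-gap variant via weak duality is a correct, slightly more general version.

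Your caveat is justified and in fact understated. Since $\langle\nu_\pi,u\rangle-\cR^*(u)\le\cR(\mu)+\langle\nu_\pi-\mu,u\rangle$ for any $\mu\in\mathrm{dom}\,\cR$, for typical $\cR$ (entropy, linear rewards) one gets $\inf_{\sfq}\cG(\pi,\sfq)=-\infty$ for \emph{every} $\pi$, so the literal hypothesis is met by arbitrarily bad $\h\pi$, not just by $\pi^*$. Note also that the pointwise identity needs the target to be $x$-dependent, $\sfq(\cdot|x)$ as in Part~2, so that $\sup_{\sfq}$ and $\E_x$ commute.
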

\begin{proof}
  By Part 1 of the Theorem,
  $\cJ(\pi) = \max_\sfq \cG(\pi,
  \sfq)$. If $(\h \pi, \h \sfq)$ is an
  $\e$-equilibrium, then
  $\cJ(\h \pi) \le \cJ(\pi^*) + \e$.
\end{proof}

Finding an equilibrium of this minimax game ends up being a (practically, and in many specific cases, theoretically) far more tractable problem than the original primal minimization problem. In Appendix \ref{app:solving-minimax}, we describe approaches for this computation via no-regret and best-response dynamics. 

\subsection{Specific Instantiations}

The power of our framework lies in its universality. By selecting the
functional $\cR$, we recover distinct alignment paradigms as specific
instantiations of the game's Target Step:

\textbf{Standard RL.} When the goal is simply to maximize a scalar reward $r(z)$ (e.g., $r(z) = \Ind[z=z_{\text{gold}}]$), the answer-level functional is linear: $\cR(\nu) = - \E_{z \sim \nu}[r(z)]$. In this case, the convex conjugate $\cR^*(u)$ collapses the dual space to a \emph{single point}, fixing the optimal Target distribution to the exponentiated reward distribution: $q^*(z) \propto \exp(r(z)/\beta)$ (see Proposition~\ref{prop:dpo_duality}).

\textbf{Supervised Fine-Tuning.} One interpretation of SFT is that we want to train the model to match a specific ground truth Dirac distribution $\delta_{\text{ground truth}}$. This is captured by the functional $\cR(\nu) = \KL(\delta_{\text{ground truth}} \parallel \nu)$. The corresponding game is trivial; the Target is fixed to the
ground truth $q^*(z) = \delta_{\text{ground truth}}(z)$.

\textbf{Diversity.}  When the goal is
exploration, we choose $\cR(\nu) = -H(\nu)$ (negative entropy). The
Target $\sfq$ acts as an \emph{adversary}, maximizing the dual
potential $\Psi(\sfq)$ (the log-partition function). The optimal
strategy $\sfq^*$ assigns high mass to under-represented answers,
forcing the policy to spread its probability mass. As we show in
Appendix~\ref{sec:diversity}, this recovers the ``Inverse-Frequency''
heuristic \citep{Li2025}.

\textbf{Safety.} The goal of safety can be cast as requiring that the policy $\pi$ lies in a ``safe'' set $\cC$. This can be enforced by the functional $\cR(\nu) = \mathrm{Ind}_{\cC}(\nu)$ (taking value $0$ if $\nu \in \cC$ and $+\infty$ otherwise). For this $\cR$, the corresponding target $\sfq^*
= \argmin_{\sfq \in \cC} \KL(\sfq \parallel \nu_\pi)$, the information projection of $\sfq$ onto $\cC$. See Appendix~\ref{sec:safety} for details. 

\textbf{Coherence.}  Finally, we remark briefly here that (with minor changes to the setup) the goal of self-improvement via consensus \citep{MohriSchneiderWu2025} can be captured by setting $\cR$ to a specific expected divergence. We discuss this in greater detail in Section~\ref{sec:coherence}. 


\ignore{
\subsection{Two Illustrations: Diversity and Coherence}

This framework unifies different alignment goals by simply changing
the convex functional $\cR$ and its corresponding potential
$\Psi(\sfq)$.

\textbf{1. The Diversity Game.} In this setting, the goal is to
encourage the policy to cover the space of valid answers, maximizing
the entropy of the answer distribution $\nu_\pi$. We can choose
$\cR(\nu) = -H(\nu)$ (negative entropy) or the Gini-Simpson index
$\cR(\nu) = \sum_z \nu(z)^2$.
\begin{itemize}
\item \textbf{Example:} If $\cR(\nu) = -H(\nu)$, the Target $\sfq$
  acts adversarially. The dual potential $\Psi(\sfq)$ corresponds to
  the log-partition function, and the optimal Target $\sfq^*$
  upweights answers that are currently under-represented by the
  policy. The equilibrium forces the policy to spread its mass to
  cover the support of $\sfq$ \citep{Li2025}.
\end{itemize}

\textbf{2. The Coherence Game (Self-Improvement).} Here, the goal is
consistency: we require that for an input $x$ and any task-preserving
transformation $x'$, the distribution of answers should be
identical. Formally, we minimize the divergence between the answer
distributions $\nu_\pi(\cdot|x)$ and $\nu_\pi(\cdot|x')$ across an
equivalence class.

\begin{itemize}
\item \textbf{Example:} Let
  $\cR(\nu) = \E_{x' \sim \text{orbit}(x)} [\sfD(\nu(\cdot|x') \|
  \bar{\nu})]$. In the game, the Target $\sfq$ becomes the
  \emph{Bregman Centroid} of the distributions in the equivalence
  class. For standard divergences like KL, this corresponds to the
  geometric mean. The equilibrium forces the model to collapse its
  probability mass onto this consensus distribution, effectively
  \emph{self-distilling} the correct reasoning path
  \citep*{MohriSchneiderWu2025}.
\end{itemize}
}

%% file: generic_alg.tex
\section{Generic Algorithm:  Solving the Game via GRPO}
\label{sec:generic}

We now describe practical algorithms for solving the Distributional
Alignment Game. We use \emph{Group Relative Policy Optimization
  (GRPO)} as the solver for the Policy Step, feeding it rewards
derived from the Target Step.

\emph{Group Relative Policy Optimization (GRPO)} is a policy gradient
algorithm designed for scenarios where evaluation occurs over a
\emph{group} of outputs. For each input $x$, GRPO samples a group of
$K$ outputs $G = \{y_1, \dots, y_K\}$ generated by the current policy
$\pi_{\text{old}}$. It updates the policy using a surrogate objective
that relies on advantages $A_i$ computed relative to the group
average:
\begin{align}
\label{eq:grpo}
& \mspace{-10mu} \cL_{\text{GRPO}}(\pi) \nonumber\\
& \mspace{-10mu} = \E_{x} \bracket*{ \E_{G \sim \pi_{\text{old}}} \bracket*{ \frac{1}{K} \sum_{i=1}^K
  \frac{\pi(y_i|x)}{\pi_{\text{old}}(y_i|x)} A_i
  - \beta \KL(\pi \| \pi_{\text{ref}}) }}
\end{align}
Crucially, GRPO is an optimization engine that requires an external
definition of \emph{advantage} or \emph{reward}. In our framework, we
define a general algorithm by deriving these advantages directly from
the game's equilibrium condition.

Recall that the optimal ALFT policy satisfies
$\pi^*(y|x) \propto \pi_0(y|x)\sfq^*(\EX(y)|x)$ (Theorem~\ref{thm:consistency}). This
suggests that the raw \emph{reward} for a trace $y$ should be determined by
the likelihood of its answer under the optimal Target distribution
$\sfq^*$. We therefore define the reward for the $i$-th trace in the group
as:
\[
  R_i = \log \sfq^*(\EX(y_i)|x).
\]
The advantage $A_i$ is then defined as the standardized reward
relative to the group statistics, which serves as a baseline to reduce
variance:
\begin{equation}
\label{eq:advantage}
A_i
= \frac{R_i - \text{Mean}(\{R_1, \dots, R_K\})}{\text{StdDev}(\{R_1, \dots, R_K\})
  + \e} \ ,
\end{equation}
where $\e > 0$ is a small positive number.  An advantage $A_i > 0$
indicates that trace $y_i$ aligns better with the Target than the
average trace in the group. This general algorithm can be applied to
any distributional alignment problem where the optimal Target $\sfq^*$
can be estimated. We now present two specific instantiations of this
algorithm.

\subsection{Theoretical Justification for GRPO}

To understand why GRPO solves the game, consider the \textbf{Policy
  Step} derived in Theorem 2. For a fixed target $\sfq^*$, the
optimal policy $\pi^*$ must minimize the divergence to the lifted
target
$\tilde{\sfq}(y|x) \propto \pi_0(y|x)\sfq^*(\EX(y)|x)$.
This minimization problem can be rewritten as maximizing an expected
reward subject to a KL constraint:
\begin{align*}
& \argmin_\pi \KL(\pi \| \tilde{\sfq})\\
& = \argmin_\pi \E_{y \sim \pi} \bracket*{ \log \frac{\pi(y)}{\pi_0(y)\sfq^*(\EX(y))} } \\
& = \argmax_\pi \E_{y \sim \pi} \bracket*{ \underbrace{\log \sfq^*(\EX(y))}_{\text{Reward } R(y)} - \underbrace{\log \frac{\pi(y)}{\pi_0(y)}}_{\text{KL Penalty}} },
\end{align*}
an Entropy-Regularized Reinforcement Learning problem where the reward
function is defined by the Target distribution:
\begin{equation}
\label{eq:game_reward}
R(y) = \beta \log \sfq^*(\EX(y)).
\end{equation}
GRPO is an efficient gradient estimator for exactly this type of
objective. It estimates the policy gradient using a group of samples
$G=\{y_1, \dots, y_K\}$ and a group-based baseline to reduce variance.
Crucially, the Advantage formulation in GRPO (Eq.~\ref{eq:advantage})
can be viewed as an empirical estimate of the gradient direction
derived from the Nash Equilibrium condition.

This connection to entropy-regularized RL suggests a deeper
relationship with recent advancements in preference optimization. We
formalize this link in the following proposition, which establishes
our framework as the dual of Direct Preference Optimization (DPO).

\begin{proposition}[Duality with Direct Preference Optimization]
\label{prop:dpo_duality}
The Distributional Alignment Game is the mathematical dual of Direct
Preference Optimization (DPO) for answer-level
objectives. Specifically, the optimal target distribution $\sfq^*$
corresponds to the exponentiated ground-truth reward function $r^*$ of
the equivalent RL problem:
\begin{equation}
\label{eq:dpo_duality}
\sfq^*(\EX(y)) = \frac{1}{Z} \exp\paren*{ \frac{r^*(y)}{\beta} },
\end{equation}
where $Z$ is the partition function. Thus, while DPO eliminates the
reward model to solve for the policy directly, our framework
eliminates the policy to solve for the optimal target distribution.
\end{proposition}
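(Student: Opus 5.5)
The approach is to compare two closed forms for the optimal policy: the KL-regularized RL solution that underlies DPO, and the Policy Step of Theorem~\ref{thm:consistency}. We then identify the reward with the log of the optimal target. First we fix what ``the equivalent RL problem'' means. Given the optimal target $\sfq^*$, we consider KL-regularized reward maximization $\max_\pi \E_x\bigl[\E_{y\sim\pi}[r(y)] - \beta\,\KL(\pi(\cdot|x)\|\pi_0(\cdot|x))\bigr]$. By the standard Gibbs variational argument (the same computation used in the Theoretical Justification subsection), its unique maximizer is $\pi_r(y|x) = \pi_0(y|x)\exp(r(y)/\beta)/Z_r(x)$. This is the relation DPO inverts, writing $r = \beta\log(\pi_r/\pi_0) + \beta\log Z_r$.

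Second, we invoke Part~2 of Theorem~\ref{thm:consistency}. At the saddle point $(\pi^*,\sfq^*)$, which exists by Part~1 together with convexity and lower semi-continuity of $\cR$, the optimal policy is $\pi^*(y|x)\propto \pi_0(y|x)\,\sfq^*(\EX(y)|x)$. Comparing with $\pi_r$, we define $r^*$ as any reward whose RL solution equals $\pi^*$. Since the map $r\mapsto\pi_r$ is injective modulo additive functions of $x$ (on the support of $\pi_0$), we get $r^*(y) = \beta\log\sfq^*(\EX(y)|x) + c(x)$. Exponentiating and normalizing over $\sZ$ gives \eqref{eq:dpo_duality}, with $Z = \sum_{z}\exp(r^*(z)/\beta)$ absorbing $c(x)$. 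Here we use that $r^*$ depends on $y$ only through $\EX(y)$. This holds because $\log\sfq^*\circ\EX$ is constant on fibers, so $r^*$ descends to a function on $\sZ$ and $Z$ is well defined.

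Third, we give the ``duality'' interpretation. DPO substitutes $r=\beta\log(\pi/\pi_0)+\mathrm{const}$ to eliminate $r$ and optimize over $\pi$. Dually, here we substitute $\pi=\pi^*(\sfq)$ from Part~2 into $\cG$ to eliminate $\pi$. We will verify this: plugging in gives $\cG(\pi^*(\sfq),\sfq) = \E_x[-\beta\log \sum_y \pi_0(y|x)\sfq(\EX(y)) - \Psi(\sfq)]$, a concave problem in $\sfq$ alone, which is the claimed ``solve for the target'' statement. As a sanity check we treat the linear case $\cR(\nu)=-\E_\nu[r]$. There $\cR^*(u)=\max_z(u(z)+r(z))$ and $\Psi(\sfq)=\max_z(r(z)-\beta\log\sfq(z))$. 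The outer maximization forces $r-\beta\log\sfq$ to be constant, i.e.\ $\sfq^*\propto\exp(r/\beta)$, which matches the Standard RL remark.

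The main obstacle is that the statement is partly interpretive: ``mathematical dual'' and ``ground-truth reward of the equivalent RL problem'' are not formally defined. The rigorous content is the identification of $r^*$ up to an $x$-dependent constant, together with the support caveat where $\pi_0(y|x)=0$. I would make both explicit by stating the proposition as: $\pi^*$ is the KL-regularized RL optimum for reward $r^*$ if and only if $r^* = \beta\log\sfq^*\circ\EX + c(x)$.
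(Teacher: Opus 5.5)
Your argument is correct and matches the paper's: it equates the KL-regularized RL optimum $\pi_0(y|x)\exp(r^*(y)/\beta)/Z$ with the Part~2 form $\pi^*\propto\pi_0\,\sfq^*(\EX(y)|x)$, and your identification of $r^*$ only up to an $x$-dependent constant on the support of $\pi_0$ sharpens the paper's one-line ``equating the two forms''. Two side remarks are slightly off but do not affect the main identity: saddle-point existence does not follow from Part~1 alone (it needs a minimax theorem plus attainment of the dual supremum, which can fail when the maximizing $\sfq$ has zero entries, and the paper simply takes $\sfq^*$ as given), and the reduced objective $-\beta\log\langle\nu_{\pi_0},\sfq\rangle-\Psi(\sfq)$ is concave in $u=-\beta\log\sfq$ but not in general in $\sfq$, since its first term is convex in $\sfq$.
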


\begin{proof}
  In the standard RL setting used by DPO \citep{Rafailov2023}, the
  optimal policy $\pi^*$ maximizing a reward $r^*$ subject to
  KL-regularization from $\pi_0$ has the closed form:
\[
\pi^*(y) \propto \pi_0(y) \exp\paren*{ \frac{r^*(y)}{\beta} }.
\]
In our game-theoretic framework (Theorem 1), we derived that for a
fixed optimal target $\sfq^*$, the optimal policy satisfies:
\[
\pi^*(y) \propto \pi_0(y) \sfq^*(\EX(y)).
\]
Equating these two forms immediately yields the relation
$\sfq^*(\EX(y)) \propto \exp(r^*(y)/\beta)$. This proves that learning
the optimal target distribution $\sfq^*$ is equivalent to learning the
implicit reward function of the underlying decision process.
\end{proof}


\begin{figure*}[!t]
  \centering
  \includegraphics[scale = .155]{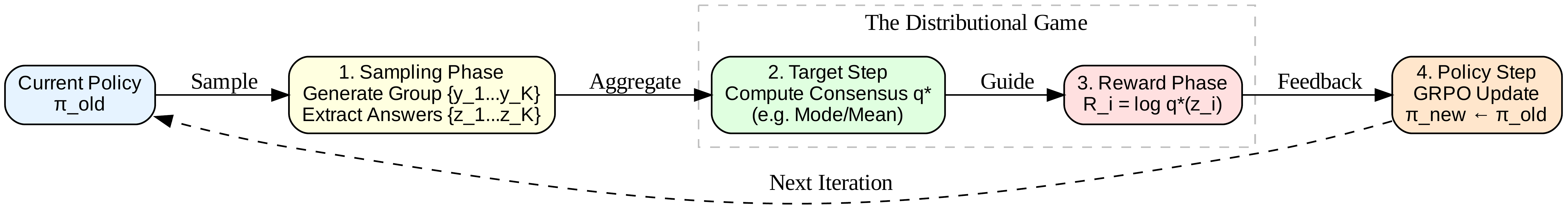}
  \caption{The Game-Theoretic Alignment Loop. This diagram illustrates
    the alternating best-response dynamics used in our GRPO-based
    algorithms. (1) The \emph{Policy} generates a group of traces. (2)
    The \emph{Target Step} solves the game by aggregating these
    outputs into an optimal target distribution $\sfq^*$ (e.g., via
    consensus for coherence or inverse-frequency for diversity). (3)
    This target defines the reward signal. (4) The \emph{Policy Step}
    updates the model to align with the target.}
    \label{fig:game_loop}
\end{figure*}

\subsection{Generic Algorithm: \algname{Game-GRPO}}

Based on this justification, we define a generic algorithm,
\algname{Game-GRPO}. At each step, we first compute the optimal Target
$\sfq^*$ (Target Step), which defines the reward $R$. We then compute
advantages and update the policy (Policy Step).

The Advantage $A_i$ for a trace $y_i$ is defined as its standardized
reward relative to the group statistics:
\begin{equation}
\label{eq:advantage-game}
A_i = \frac{R(y_i) - \text{Mean}(\{R(y_j)\}_{j=1}^K)}{\text{StdDev}(\{R(y_j)\}_{j=1}^K)
  + \e}.
\end{equation}
The policy update maximizes the surrogate objective:
{\small
\[
  \cL_{\text{GRPO}}(\pi)
  = \E_{x} \bracket*{\E_{G \sim \pi_{\text{old}}} \bracket*{
      \frac{1}{K} \sum_{i=1}^K \frac{\pi(y_i|x)}{\pi_{\text{old}}(y_i|x)} A_i
      - \beta \KL(\pi \| \pi_{\text{ref}}) }}.
\]
}

Thus, our general algorithmic framework consists of alternating two steps (illustrated in Figure~\ref{fig:game_loop}):

1.  \textbf{Target Step:} Estimate the optimal $\sfq^*$ from the
sampled group $G$ (e.g., via majority vote or centrality).

2.  \textbf{Policy Step:} Calculate rewards
$R_i = \log \sfq^*(\EX(y_i))$ and perform a GRPO update.


%% file: diversity.tex
\section{Diversity-Promotion: Problem and Algorithms}
\label{sec:diversity}

In domains like creative generation or exploratory search, the goal is
to prevent mode collapse and encourage the policy to cover a broad
distribution of valid answers, that is \emph{diversity}.

\subsection{The Inverse-Frequency Heuristic}

A prominent recent approach to this problem is the
\emph{Inverse-Frequency Reward} proposed by \citet{Li2025}. To
combat mode collapse, they introduce a heuristic reward that penalizes
frequent answers within a sampled group.

Formally, given a group of traces $G = \{y_1, \dots, y_K\}$ generating
answers $z_i = \EX(y_i)$, they define the reward for a trace $y_i$ as
the negative log-frequency of its answer:
\begin{equation}
\label{eq:inverse_freq}
R_{\text{inv-freq}}(y_i)
= -\log \paren*{ \frac{\text{Count}(z_i)}{K} }.
\end{equation}
Intuitively, this mechanism assigns high rewards to rare answers
(high surprisal) and low rewards to common ones, pushing the policy to
spread its mass to the tails of the distribution. While empirically
effective, this method was originally motivated primarily by intuition
regarding exploration.

\subsection{Theoretical Grounding via Distributional Alignment}

Our framework provides a rigorous theoretical derivation for this
algorithm. Within the ALFT framework, diversity is not a heuristic but
a specific instantiation of the objective function. We will refer
to the resulting algorithm by \algname{Diversity-GRPO}.

\subsubsection{Recovering Heuristic GRPO-Based Algorithm}

Let the answer-level regularization functional be the entropy
functional, $\cR(\nu) = \lambda H(\nu)$. The gradient of the entropy
with respect to the marginal distribution is:
\[
  \nabla H(\nu_\pi)
  = -\E_{z \sim \nu_\pi} \bracket*{ \nabla \log \nu_\pi(z) (\log \nu_\pi(z) + 1) }.
\]
We now derive the gradient of the entropy functional
$\cR(\nu_\pi) = H(\nu_\pi)$ with respect to the policy parameters. Let
$\nu_\pi(z)$ denote the marginal probability of answer $z$, then we
can write:
\begin{align*}
  \nabla H(\nu_\pi)
  & = \nabla \paren*{ - \sum_{z \in \mathcal{Z}} \nu_\pi(z) \log \nu_\pi(z) } \\
  & = - \sum_{z \in \mathcal{Z}} \nabla \paren*{ \nu_\pi(z) \log \nu_\pi(z) } \\
  & = - \sum_{z \in \mathcal{Z}} \nabla \nu_\pi(z) (\log \nu_\pi(z) + 1) \\
  & = - \sum_{z \in \mathcal{Z}} \nu_\pi(z) \nabla \log \nu_\pi(z) (\log \nu_\pi(z) + 1)
    \tag{$\log$-derivative identity:  $\nabla \nu = \nu \nabla \log \nu$} \\
  & = -\E_{z \sim \nu_\pi} \bracket*{ \nabla \log \nu_\pi(z) (\log \nu_\pi(z) + 1) }.
\end{align*}
This derivation shows the implicit reward structure of the maximum
entropy objective. Comparing this expression to the standard policy
gradient form $\nabla J = \E [\nabla \log \pi \cdot R]$, we identify
the effective reward signal $R(z)$ as the term scaling the gradient:
\[
R(z) = -(\log \nu_\pi(z) + 1) \approx -\log \nu_\pi(z).
\]
Since $\log \nu_\pi(z)$ is negative, the term $-\log \nu_\pi(z)$ is
positive. This means that answers with low probability (rare
answers) generate a large positive reward, while answers with high
probability (common answers) generate a smaller reward. Optimizing
this objective therefore incentivizes the policy to shift probability
mass from frequent answers to rare ones, theoretically justifying the
\emph{Inverse-Frequency} heuristic used for diversity promotion.

This implies that to maximize entropy via stochastic gradient descent,
the reward signal for a trace $y$ leading to answer $z$ must be
proportional to its \emph{surprisal} (negative log-likelihood):
\[
R_{\text{theory}}(y) \propto -\log \nu_\pi(\EX(y)).
\]
To implement this in the GRPO setting, we must estimate the current
marginal $\nu_\pi$ from the sampled group $G$. Using the Arithmetic
Mean (empirical frequency) estimator discussed in Section \ref{app:consensus_target}, we have
$\h \nu_\pi(z) \approx \frac{\text{Count}(z)}{K}$. Substituting this
into the theoretical reward yields:
\[
  R_{\text{theory}}(y_i) \approx -\log \paren*{
    \frac{\text{Count}(\EX(y_i))}{K} }.
\]
This demonstrates that the Inverse-Frequency Reward of
\citet{Li2025} is exactly a valid GRPO update step for the
\emph{Max-Entropy Distributional Alignment Game}.

\subsubsection{Alternative \algname{Diversity-GRPO} algorithms}

The game-theoretic perspective reveals that the inverse-frequency
reward is just one of many possible diversity-promoting objectives. By
changing the functional $\cR$, we can derive alternative algorithms
instances of our general \algname{Diversity-GRPO} algorithm, that may offer
better stability or properties:

\begin{enumerate}

\item \emph{Gini Diversity}: If we instead maximize the Gini index
  $\cR(\nu) = 1 - \sum_z \nu(z)^2$ (collision probability),
  the gradient yields a linear minority reward:
  \[
    R_{\text{gini}}(y_i) = 1 - \frac{\text{Count}(\EX(y_i))}{K}.
  \]
  Unlike the log-reward, this is bounded and does not explode for
  unique answers (where count is 1), potentially offering more stable
  optimization for small group sizes.

\item \emph{Targeted Exploration}: We can replace the uniform entropy
  objective with a divergence relative to a specific prior
  $Q_{\textrm{prior}}$. The game then drives the policy to match
  $Q_{\textrm{prior}}$ rather than just spreading mass uniformly. The
  reward becomes
  $R(y_i) = \log Q_{\textrm{prior}}(\EX(y_i)) - \log \h
  \nu_\pi(\EX(y_i))$, acting as an importance-weighted exploration
  signal.
\end{enumerate}

\begin{algorithm}[H]
\caption{\algname{Diversity-GRPO} (General Formulation)}
\label{alg:diversity_grpo}
\begin{algorithmic}[1]
  \REQUIRE Dataset $\cD$, Policy $\pi$, Group size $K$,
  \textbf{Objective} (Entropy or Gini).
\FOR{each training step}
    \STATE Sample batch $B \sim \cD$.
    \FOR{each $x \in B$}
    \STATE \textbf{1. Group Sampling:} Sample $K$ traces $\{y_1, \dots, y_K\}
    \sim \pi(\cdot|x)$.
        \STATE \textbf{2. Target Step (Marginal Estimation):}
        \STATE Estimate empirical marginals:
        $\h \nu(z) = \frac{\text{Count}(z)}{K}$.
        \STATE \textbf{3. Reward Calculation:}
        \IF{\textbf{Objective} is Entropy \citep{Li2025}}
        \STATE $R_k = -\log \h \nu(\EX(y_k))$
        \quad \textit{// Inverse-Frequency (High Variance)}
        \ELSIF{\textbf{Objective} is Gini}
        \STATE $R_k = 1 - \h \nu(\EX(y_k))$
        \quad \textit{// Probability of Collision (Bounded)}
        \ENDIF
        \STATE Compute advantages $A_k$ via Eq. \eqref{eq:advantage}.
    \ENDFOR
    \STATE \textbf{4. Update:} Optimize $\pi$ using GRPO.
\ENDFOR
\end{algorithmic}
\end{algorithm}

Algorithm~\ref{alg:diversity_grpo} illustrates the generality of our
framework. By selecting the Entropy objective, we recover the
inverse-frequency method of \citet{Li2025}. However, our
game-theoretic derivation reveals that we can easily substitute the
Gini objective to obtain a bounded, lower-variance reward signal
($1 - \h \nu$) without changing the underlying algorithm.

%% file: self-improvement.tex
\section{Answer-Level Self-Improvement via Coherence}
\label{sec:coherence}

\subsection{Background on Self-Improvement via Coherence}
\label{sec:background_coherence}

Here, we briefly give some background on self-improvement via
coherence. A more extensive description is given in
Appendix~\ref{app:trace_level_coherence}.

Standard self-improvement methods typically rely on heuristic
filtering or scalar rewards. In contrast, the coherence framework proposed by \citet{MohriSchneiderWu2025} offers a
rigorous geometric approach grounded in the principle of
\emph{coherence}. The core idea is that a reliable model should
produce mutually consistent distributions for inputs that are
semantically equivalent (e.g., a prompt $x$ and its paraphrase
$\Phi(x)$). Formally, let $\Phi: \sX \rightarrow \sX$ be a task-preserving transformation\footnote{Following \cite{MohriSchneiderWu2025}, we assume for mathematical simplicity that $\Phi$ is an involution, i.e., that $\Phi(\Phi(x)) = x$.} The set of \emph{coherent policies} is defined as
$\mathcal{C}_{\text{coh}} = \{ \pi \mid \nu_{\pi}(\cdot|x) =
\nu_{\pi}(\cdot|\Phi(x)) \}$. 

Importantly, as with the other answer-level tasks in this paper, it is the induced distribution $\nu_{\pi}$ on answers that we require to be coherent, as opposed to the immediate distribution $\pi(\cdot | x)$ on sequences $y$. In particular, for a task like a math problem, $y$ represents the chain-of-thought used to solve the problem and $z$ the numerical result. Our training algorithm should minimize the incoherence of the answer distributions on two equivalent problems $x$ and $\phi(x)$ while maintaining the diversity of valid reasoning paths in $\sY$.

Self-improvement is cast as a constrained optimization problem:
finding the coherent policy $\h \pi$ that is closest to the initial
baseline $\pi_0$ measured by a Bregman divergence $D_F$ (such as the
KL divergence):
\[
  \h \pi
  = \argmin_{\pi \in \mathcal{C}_{\text{coh}}}
  \E_x [ D_F(\pi(\cdot|x) \parallel \pi_0(\cdot|x)) ].
\]
This operation is a \emph{Bregman Projection}. \citet{MohriSchneiderWu2025}
prove that this projection guarantees \emph{monotonic improvement}:
the new policy $\h \pi$ is strictly closer to the optimal ground-truth
policy $\pi^*$ than the baseline $\pi_0$ was, provided $\pi^*$ is
itself coherent. 

This minimization problem can also be captured as a slight generalization of the ALFT framework, where we allow the functional $\cR$ (previously a functional acting on distributions over $\cZ$ ($\nu_{\pi}(\cdot | x)$)) to act on the collection of distributions in an equivalence class. In particular, if we define $\cR(\{\nu_\pi(\cdot|x')\}_{x' \in \cO_x})$ to equal $0$ if all distributions $\nu_{\pi}(\cdot|x')$ are equal and $+\infty$ otherwise (i.e., $\cR$ is the indicator function of the restriction of $\cC_{\text{coh}}$ to the orbit of $x$), then the above minimization problem agrees with the minimax problem defined in Theorem~\ref{thm:consistency}. The corresponding consensus target $\sfq^*$ is given by the geometric mean $\sfq^*(z)
= \frac{1}{Z} \paren*{ \prod_{x' \in \cO_x} \nu(z | x') }^{\frac{1}{|\cO_{x}|}}$, where $Z$ is a normalizing factor.

\subsection{Theoretical Analysis of Consensus Targets}
\label{sec:consensus_theory}

The core of the Coherence Game lies in defining the optimal Target
$\sfq^*$ that the policy must match. While the \emph{Geometric Mean}
is the exact solution for minimizing KL divergence, it is
computationally intractable for large output spaces due to the
partition function $Z$ \citep{MohriSchneiderWu2025}.  To derive a
practical algorithm, we analyze two tractable approximations:

\paragraph{1. The Arithmetic Mean.}  We can relax the objective to
minimize the squared Euclidean distance, yielding the Arithmetic Mean
$\sfq_{\text{AM}}(z) = \frac{1}{|\cO_{x}|} \sum _{x' \in \cO_{x}} \nu(z | x')$. Crucially, we can
prove that this relaxation remains theoretically sound.

Let $K = |\cO_{x}|$ and let $\{\nu_1, \nu_2, \dots, \nu_K\}$ be an enumeration of the distributions $\nu(z | x')$ for $x' \in \cO_{x}$. Let
$H^2(\{\nu_k(z) \}) = 1 - \sum_{z \in \sZ} \paren*{ \prod_{k =
    1}^K \nu_k(z) }^{1/K}$ denote the Generalized Squared
Hellinger (GSH) distance, which is a measure of the diversity of the
distributions $\nu_k$.
Let the improvement of a solution $\pi$ with respect to the baseline
$\pi_0$ be defined as
{\small
\[
 \Improv(\pi) = \E\bracket*{\KL(\pi^*(x) \parallel \pi_0(x))} -
 \E\bracket*{\KL(\pi^*(x) \parallel \pi(x))}.
\]
}
Then, the following guarantee holds.

\begin{theorem}[Stability of Arithmetic Consensus]
  Let $\h \pi_{\text{Hybrid}}$ be the policy resulting from projection
  onto the Arithmetic Mean. Its improvement degradation relative to
  the ideal Geometric Mean projection is bounded by the empirical
  diversity of the sample group:
{\small
\[
  \Improv(\h \pi_{\text{Hybrid}})
  \geq \Improv(\h \pi_{\text{Ideal}})
  - 2 L B \, \E_{x} \bracket*{ H^2(\{\nu_k(\cdot | x)\}) },
\]
}
where $L$ is a Lipschitz constant and $B$ upper bounds the
gradient norms of $\log \pi(y)$.
\end{theorem}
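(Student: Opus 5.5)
The idea is to write both improvements as the same expected KL difference and reduce the gap to a comparison between the two target distributions. Since $\Improv(\pi) = \E[\KL(\pi^*\|\pi_0)] - \E[\KL(\pi^*\|\pi)]$, the baseline term cancels, and
\[
\Improv(\h \pi_{\text{Ideal}}) - \Improv(\h \pi_{\text{Hybrid}})
= \E_x \E_{y\sim \pi^*}\bracket*{\log \h \pi_{\text{Ideal}}(y|x) - \log \h \pi_{\text{Hybrid}}(y|x)}.
\]
By Part 2 of Theorem~\ref{thm:consistency}, both projected policies are exponential tilts of $\pi_0$: $\h\pi_{\text{Ideal}}(y|x) \propto \pi_0(y|x)\,\sfq_{\text{GM}}(\EX(y))$ and $\h\pi_{\text{Hybrid}}(y|x) \propto \pi_0(y|x)\,\sfq_{\text{AM}}(\EX(y))$. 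Here $\sfq_{\text{GM}}$ is the normalized geometric mean and $\sfq_{\text{AM}}$ is the arithmetic mean. The log-ratio is therefore $\log \sfq_{\text{GM}}(\EX(y)) - \log \sfq_{\text{AM}}(\EX(y))$, plus the difference of the two log-partition functions.

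To obtain the stated form, I would view the policies as members of a parametric family $\pi_\theta$. The tilt is implemented by a parameter displacement $\theta_{\text{Ideal}} - \theta_{\text{Hybrid}}$, which is $L$-Lipschitz in the target distribution. The assumption $\norm{\nabla \log \pi(y)} \le B$ then gives, pointwise in $y$,
\[
\abs*{\log \h\pi_{\text{Ideal}}(y) - \log \h\pi_{\text{Hybrid}}(y)} \le B\,\norm{\theta_{\text{Ideal}} - \theta_{\text{Hybrid}}} \le L B \,\norm{\sfq_{\text{GM}} - \sfq_{\text{AM}}}_1 .
\]
This holds by the mean value theorem along the segment between the parameters. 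The Lipschitz constant $L$ absorbs the map from targets to parameters; this is exactly the role the statement assigns to "a Lipschitz constant."

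What remains is the deterministic bound $\norm{\sfq_{\text{GM}} - \sfq_{\text{AM}}}_1 \le 2 H^2(\{\nu_k\})$. Write $g(z) = (\prod_k \nu_k(z))^{1/K}$ and $Z = \sum_z g(z) = 1 - H^2$. By AM–GM, $g(z) \le \sfq_{\text{AM}}(z)$ pointwise, so
\[
\norm{\sfq_{\text{AM}} - g}_1 = \sum_z \paren*{\sfq_{\text{AM}}(z) - g(z)} = 1 - Z = H^2 .
\]
Next, $\norm{g - g/Z}_1 = (1/Z - 1)Z = 1 - Z = H^2$. The triangle inequality gives $\norm{\sfq_{\text{AM}} - \sfq_{\text{GM}}}_1 \le 2H^2$, which is the source of the factor $2$. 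Taking $\E_x$ completes the argument.

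The main obstacle is the Lipschitz step. Working at the exact (nonparametric) level, the log-ratio $\log(\sfq_{\text{GM}}/\sfq_{\text{AM}})$ is not controlled by $\norm{\cdot}_1$ when some $\nu_k(z)$ is near zero. The constants $L$ and $B$ are what make this step go through, so I would state explicitly the regularity assumption they encode: the parameter map is $L$-Lipschitz from targets in $\ell_1$, and the score functions are bounded by $B$. The argument only needs this as a sufficient hypothesis; I would not try to derive it.
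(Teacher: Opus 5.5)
Your argument is valid under the hypothesis you state, but it follows a different route from the paper, and your $L$ is not the paper's $L$.

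In the paper (Theorem~\ref{th:hybrid-stability-hellinger}), three choices differ from yours:
\begin{enumerate}
\item $\h\pi_{Pure}$ ($=\h\pi_{\text{Ideal}}$) and $\h\pi_{Hybrid}$ are the distillation projections $\argmin_{\pi\in\Pi}\KL(\ov\pi\,\|\,\pi)$ of the two consensus targets. They are not the tilts of Part~2 of Theorem~\ref{thm:consistency}.
\item $L$ is the Lipschitz constant of $\pi\mapsto\KL(\pi^*\|\pi)$ in $\ell_2$. This gives $\Improv(\h\pi_{Pure})-\Improv(\h\pi_{Hybrid})\le L\,\E_x\|\h\pi_{Hybrid}-\h\pi_{Pure}\|_2$.
\item The displacement is then bounded by $B\|\ov\pi_{AM}-\ov\pi_{GM}\|_1$ through a strong-convexity stability argument. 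The key fact is that the cross-entropy $-\sum_y\sfq(y)\log\pi(y)$ is linear in $\sfq$, so the gradient difference is $\sum_y(\ov\pi_{AM}-\ov\pi_{GM})(y)\nabla\log\pi(y)$.
\end{enumerate}

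You instead write the gap exactly as $\E_{\pi^*}[\log\h\pi_{\text{Ideal}}-\log\h\pi_{\text{Hybrid}}]$. You use $B$ in a mean-value step in parameter space, and you posit that the target-to-parameter map is $L$-Lipschitz. The $2H^2$ bound via the unnormalized geometric mean is the same in both proofs.

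Your version has two advantages:
\begin{enumerate}
\item It needs no Lipschitz property of $\KL(\pi^*\|\cdot)$, which the paper itself concedes fails at the simplex boundary.
\item Your use of $B$ is internally consistent: scores and displacement both live in parameter space. The paper's stability step mixes a distribution-space strong-convexity claim (via Pinsker) with parameter-space score bounds.
\end{enumerate}

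The price is that your hypothesis \emph{is} the stability of the projection, which is the step the paper tries to derive. Under the paper's stated assumptions, your proof does not apply as written.

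You can derive the hypothesis rather than assume it. Use the distillation projection and assume $\mu$-strong convexity of $\theta\mapsto-\sum_y\sfq(y)\log\pi_\theta(y)$. First-order optimality plus the same linearity in $\sfq$ then gives $\|\theta(\sfq)-\theta(\sfq')\|\le(B/\mu)\|\sfq-\sfq'\|_1$, i.e.\ $L=B/\mu$.

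Finally, drop the detour through the tilt form. It is never used in your argument. Under that reading, the zeros of the geometric mean make $\log\h\pi_{\text{Ideal}}$ unbounded, which is exactly where your Lipschitz assumption breaks.
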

This result (proven in Appendix~\ref{app:arithmetic_approximation})
guarantees that shifting from the geometric (ideal) to the arithmetic
mean incurs only a small loss when the sample does not admit too large
a diversity.

\paragraph{2. Majority Vote.}  While the Arithmetic Mean is
stable, averaging distributions inherently increases entropy \citep{CoverThomas2006}. For reasoning tasks where a
single correct answer exists, we desire a sharpened target to
drive the policy toward decisive self-consistency.  Therefore, our
practical algorithm adopts the \emph{Majority Vote} (Mode), which
can be viewed as the arithmetic mean followed by a hard $\argmax$
operator:
\[
  \sfq^*(z)
  = \mathds{1}(z = \argmax \sfq_{\text{AM}}(z)).
\]
This choice sacrifices the granular probability information of the
Arithmetic Mean in exchange for a lower-entropy target that drives the
policy toward decisive self-consistency.

In Appendix~\ref{app:consensus_target}, we present a more extensive
discussion of the properties of different Target choices.

\subsection{\algname{Coherence-GRPO} Algorithm}

Our generic GRPO-based algorithm, \algname{Game-GRPO}, can be
instantiated in the case of the \emph{Coherence Game} to provide an
algorithm tackling tasks with a discrete answer space $\sZ$, such as
mathematical reasoning.

\textbf{Target Step (Orbit Consensus).} To enforce coherence, the
Target $\sfq^*$ must represent the consensus not just of the model's
current output for $x$, but of the entire equivalence class
$\text{Orbit}(x)$. Computationally, we approximate this by sampling a
small set of inputs from the orbit (e.g., $x$ and a paraphrase
$x' = \Phi(x)$). We generate groups of traces for both, pool their
answers, and compute the Global Mode of this pooled set.
The reward for a trace $y$ given $x$ is then determined by whether it
matches this \emph{global} consensus $z^*$, rather than merely the
local consensus of $x$. This penalizes the model if $\pi(\cdot|x)$ and
$\pi(\cdot|x')$ disagree, directly optimizing the coherence objective.

\textbf{Reward Definition.}  Based on the analysis in
Section~\ref{sec:consensus_theory}, we adopt the \textbf{Majority
  Vote} (Mode) target to enforce sharpening. This sets $\sfq^*$ to be
a Dirac distribution centered on the most frequent answer $z^*$. The
reward is derived directly from this sharpened target (setting $R_i = 1$ if $\EX(y_i) = \text{Mode}(\{\EX(y_j)\})$ and $0$ otherwise).

The reward for a trace $y$ given $x$ is thus determined by whether it
matches this \emph{global} consensus $z^*$, rather than merely the
local consensus of $x$. This penalizes the model if $\pi(\cdot|x)$ and
$\pi(\cdot|x')$ disagree, directly optimizing the coherence objective.

\begin{algorithm}[htb]
\caption{\algname{Coherence-GRPO} (Orbit-Level Consensus)}
\label{alg:coherence_grpo}
\begin{algorithmic}[1]
  \REQUIRE Dataset $\cD$, Policy $\pi$, Extractor $\EX$, Transformation $\Phi$
  (e.g., paraphrase).
  \FOR{each training step}
    \STATE Sample batch of seeds $B \sim \cD$.
    \FOR{each seed $x \in B$}
        \STATE \textbf{1. Orbit Sampling:}
        \STATE Generate orbit inputs $\mathcal{O}_x = \{x, \Phi(x)\}$.
        \quad \textit{// e.g., Original + Paraphrase}
        \STATE Sample $K$ traces for \emph{each} input in $\mathcal{O}_x$:
        \STATE $\quad G = \{ (y_{i,j}, x^{(j)}) \mid x^{(j)} \in \mathcal{O}_x, i \in \{1..K\}, y_{i,j} \sim \pi(\cdot|x^{(j)}) \}$.
        \STATE \textbf{2. Target Step (Orbit Consensus):}
        \STATE Extract all answers: $\mathcal{Z}_G = \{ \EX(y) \mid (y, \cdot) \in G \}$.
        \STATE Compute \emph{Global Consensus} across the orbit:
        \STATE $\quad z^* = \text{Mode}(\mathcal{Z}_G)$.
        \STATE \textbf{3. Reward Calculation:}
        \STATE For each trace $y_{i,j}$ from input $x^{(j)}$:
        \STATE $\quad R_{i,j} = \mathds{1}(\EX(y_{i,j}) = z^*)$.
        \quad \textit{// Reward match to GLOBAL consensus}
        \STATE Compute advantages $A_{i,j}$ (standardized within the orbit group $G$).
    \ENDFOR
    \STATE \textbf{4. Update:} Optimize $\pi$ using GRPO on all generated traces.
  \ENDFOR
\end{algorithmic}
\end{algorithm}

\subsection{Pairwise Answer-Level Coherence}

The previous discussion supposes the existence of a deterministic
extraction function $\EX(y) \to z$ that maps a reasoning trace to a
canonical discrete answer. In Appendix~\ref{sec:pairwise}, we develop an algorithm (\algname{Pairwise-GRPO}) that only requires access to a pairwise disagreement function $d(y, y')$ quantifying the semantic distance between two traces.

%% file: pairwise.tex
\section{Pairwise Answer-Level Coherence}\label{sec:pairwise}

In previous sections, we assumed the existence of a deterministic
extraction function $\EX(y) \to z$ that maps a reasoning trace to a
canonical discrete answer. In many complex domains, such as open-ended
question answering, summarization, or semantic code retrieval, such a
function is difficult to define or brittle to implement. A parser may
fail to extract an answer even when the reasoning is correct, or two
string-distinct answers might be semantically equivalent. Even for
some math problems, it may difficult to come up with an accurate
extraction function.

However, in these domains, it is often feasible to define a
\emph{pairwise disagreement function} $\sfd(y, y') \in [0, 1]$ that
quantifies the semantic distance between two traces. $\sfd(y, y') = 0$
implies perfect consistency, while $\sfd(y, y') = 1$ implies total
contradiction.

Note that the answer-level formulation via an extraction function
$\EX$ can always be cast as a pairwise one using $\sfd(y, y')
= \Ind\paren*{\EX(y) \neq \EX(y')}$.

We regard the pairwise formulation as a critical extension of the
standard framework, as it adapts the Coherence Game to open-ended
domains where exact answer matching is infeasible. Moreover, this
pairwise perspective naturally generalizes to other answer-level
games.

\subsection{Problem Formulation}

The goal of self-improvement is to ensure that the model's outputs are
semantically consistent across task-preserving transformations. We
therefore define the pairwise incoherence penalty over the orbit of an
input $x$:
\begin{equation}
  \Inc(\pi_\theta)
  = \E_{x' \sim \text{orbit}(x)} \bracket*{\E_{y \sim \pi(\cdot|x), y' \sim \pi(\cdot|x')}
    \bracket*{ d(y, y') }}.
\end{equation}
This forces the distribution $\pi(\cdot|x)$ to be semantically aligned
with $\pi(\cdot|x')$, rather than just collapsing to a low-entropy
state for a single prompt.

\subsection{\algname{Pairwise-GRPO} Algorithm}

Target Step (Best Response). Since we cannot compute a mode,
we view the Target $\sfq^*$ as the ``center of mass'' of the
semantic distribution. We define a pairwise distance metric
$d(y, y')$. The optimal Target assigns high probability to traces that
minimize the average transport cost to all other traces in the group.

Reward Definition. The reward is the \emph{centrality} of the trace, which
proxies for $\log \sfq^*$:
\[
R_i = \frac{1}{K} \sum_{j=1}^K (1 - d(y_i, y_j)).
\]
Thus, high $R_i$ implies $y_i$ is close to the group consensus $\sfq^*$.

\begin{proposition}[Derivation of Orbit-Centrality]
\label{prop:orbit_energy}
Let the orbit-level incoherence objective be defined as the expected
pairwise distance between traces generated from inputs in the same
equivalence class $\mathcal{O}_x$:
\[
  \mathcal{J}_{\text{orbit}}(\pi)
  = \E_{x^{(a)}, x^{(b)} \sim \mathcal{O}_x} \bracket*{
  \E_{y_a \sim \pi(\cdot|x^{(a)}), y_b \sim \pi(\cdot|x^{(b)})} \bracket*{d(y_a, y_b)} }.
\]
The gradient of this objective with respect to the policy parameters $\theta$ satisfies:
\begin{equation}
  \nabla_\theta \mathcal{J}_{\text{orbit}}(\pi)
  = 2 \E_{x \sim \mathcal{O}_x} \bracket[\Bigg]{\E_{y \sim \pi(\cdot|x)} \bracket[\Bigg]{
    \nabla_\theta \log \pi(y|x) \, \underbrace{\E_{x' \sim \mathcal{O}_x} \bracket*{
      \E_{y' \sim \pi(\cdot|x')} [d(y, y')]}}_{\text{Orbit Incoherence}} }}.
\end{equation}
To minimize this objective via policy gradient, the optimal reward
signal $R^*(y)$ must be proportional to the negative of the gradient
coefficient. Thus, we define the reward as the negative expected
distance to the orbit:
\[
  R^*(y)
  \propto - \E_{x' \sim \mathcal{O}_x} \bracket*{\E_{y' \sim \pi(\cdot|x')} [d(y, y')]}.
\]
\end{proposition}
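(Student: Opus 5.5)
The plan is to write $\mathcal{J}_{\text{orbit}}$ as a finite (or expectation-weighted) sum of bilinear terms in the policies and differentiate with the product rule and the log-derivative identity used in Section~\ref{sec:diversity}. Let $w(x^{(a)})$ denote the sampling weights on $\mathcal{O}_x$, so that
\[
\mathcal{J}_{\text{orbit}}(\pi) = \sum_{a,b} w(x^{(a)})\,w(x^{(b)}) \sum_{y_a, y_b} \pi(y_a|x^{(a)})\,\pi(y_b|x^{(b)})\, d(y_a,y_b).
\]
The distance $d$ does not depend on $\theta$. The gradient therefore falls only on the two policy factors.

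The product rule yields two sums. In the first, $\nabla_\theta \pi(y_a|x^{(a)}) = \pi(y_a|x^{(a)})\nabla_\theta \log \pi(y_a|x^{(a)})$ multiplies $\pi(y_b|x^{(b)})$. In the second, the roles of $a$ and $b$ are swapped. Rewriting the first as an expectation gives
\[
\E_{x^{(a)}}\E_{y_a\sim\pi(\cdot|x^{(a)})}\Bigl[\nabla_\theta\log\pi(y_a|x^{(a)})\,\E_{x^{(b)}}\E_{y_b\sim\pi(\cdot|x^{(b)})}[d(y_a,y_b)]\Bigr].
\]
The inner factor is exactly the orbit-incoherence coefficient of $y_a$.

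The key step, and the only nontrivial one, is showing that the second sum equals the first. This needs two facts. First, $x^{(a)}$ and $x^{(b)}$ are drawn i.i.d.\ from the same distribution on $\mathcal{O}_x$, so the pair is exchangeable. Second, $d$ is symmetric, $d(y,y')=d(y',y)$. Symmetry is not stated explicitly but is implicit in calling $d$ a semantic distance, and it holds for the canonical choice $\Ind(\EX(y)\neq\EX(y'))$. I would state it as a standing assumption. With both facts, relabeling $a\leftrightarrow b$ maps the second sum onto the first, which produces the factor $2$ and the displayed formula, after renaming $x^{(a)}\to x$ and $x^{(b)}\to x'$. If $d$ were not symmetric, the coefficient would instead be the symmetrized distance $\tfrac12(d(y,y')+d(y',y))$; I would note this as a remark.

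Finally, I would read off the reward. The expression has the score-function form $\E[\nabla_\theta\log\pi \cdot C(y)]$. Policy gradient ascent on reward $R$ uses $\E[\nabla\log\pi\cdot R]$, so minimizing $\mathcal{J}_{\text{orbit}}$ corresponds to $R^*(y)\propto -C(y)$. Here $C(y)=\E_{x'}\E_{y'\sim\pi(\cdot|x')}[d(y,y')]$, and the positive constant $2$ is absorbed into the proportionality. I would also remark that adding the constant $1$ changes nothing under GRPO standardization. With the empirical group average, this gives the centrality reward $R_i$.
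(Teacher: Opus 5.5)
Your proposal is correct and follows the paper's own proof. Both arguments apply the product rule and the log-derivative trick to the bilinear form in $\pi(\cdot|x^{(a)})$ and $\pi(\cdot|x^{(b)})$, then merge the two resulting terms into the factor $2$ by relabeling $a \leftrightarrow b$ and reading off $R^* \propto -C(y)$. Your explicit assumptions, i.i.d.\ sampling of $x^{(a)}, x^{(b)}$ from $\mathcal{O}_x$ and symmetry of $d$, are exactly what the paper's ``by symmetry'' step uses without stating it, and your weighted form matches the statement's $\E_{x'}$ more faithfully than the proof's $\sum_{x'}$.
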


\begin{proof}
  Let the objective for a fixed pair of inputs $(x^{(a)}, x^{(b)})$ be
  $J_{a,b}(\pi) = \E_{y_a \sim \pi_a, y_b \sim \pi_b} [d(y_a, y_b)]$,
  where $\pi_k = \pi(\cdot|x^{(k)})$. We compute the gradient using
  the log-derivative trick:
\begin{align*}
\nabla_\theta J_{a,b}(\pi) 
&= \nabla_\theta \sum_{y_a, y_b} \pi_a(y_a) \pi_b(y_b) d(y_a, y_b) \\
&= \sum_{y_a, y_b} \paren*{ \pi_a(y_a) [\nabla \log \pi_a(y_a)] \pi_b(y_b)
  + \pi_a(y_a) \pi_b(y_b) [\nabla \log \pi_b(y_b)] } d(y_a, y_b) \\
&= \E_{y_a, y_b} \bracket*{ \nabla \log \pi_a(y_a) \cdot d(y_a, y_b) }
  + \E_{y_a, y_b} \bracket*{ \nabla \log \pi_b(y_b) \cdot d(y_a, y_b) }.
\end{align*}
The total gradient is the sum over all pairs $a, b$. By symmetry, the
total contribution for a specific trace $y$ generated from input $x$
appears twice in the summation (once as $y_a$, once as $y_b$). Thus,
the gradient coefficient for $\nabla \log \pi(y|x)$ is exactly:
\[
2 \times \sum_{x' \in \mathcal{O}_x} \E_{y' \sim \pi(\cdot|x')} [d(y, y')].
\]
We define the reward $R(y)$ to be proportional to the negative of this
term. The factor of 2 is a constant scaler that is absorbed by the
learning rate or advantage standardization in GRPO.
\end{proof}

\begin{algorithm}[H]
\caption{\algname{Pairwise-GRPO} (Orbit-Level Consistency)}
\label{alg:pairwise_grpo}
\begin{algorithmic}[1]
\REQUIRE Dataset $\cD$, Group size $K$, Policy $\pi$, Metric $d$, Transformation $\Phi$.
\FOR{each training step}
    \STATE Sample batch of seeds $B \sim \cD$.
    \FOR{each seed $x \in B$}
        \STATE \textbf{1. Orbit Sampling:}
        \STATE Generate orbit inputs $\mathcal{O}_x = \{x, \Phi(x)\}$.
        \STATE Sample $K$ traces for each input:
        \STATE $\quad G = \{ (y_{i,j}, x^{(j)}) \mid x^{(j)} \in \mathcal{O}_x, i \in \{1..K\}, y_{i,j} \sim \pi(\cdot|x^{(j)}) \}$.
        \STATE \textbf{2. Target Step (Orbit Centrality):}
        \STATE \textit{// Reward is centrality relative to the ENTIRE orbit group}
        \FOR{each trace $y \in G$}
            \STATE $R(y) = \frac{1}{|G|} \sum_{y' \in G} (1 - d(y, y'))$.
        \ENDFOR
        \STATE \textbf{3. Advantage Calculation:}
        \STATE Compute standardized advantages $A_k$ across the orbit group $G$.
    \ENDFOR
    \STATE \textbf{4. Update:} Optimize $\pi$ using GRPO with advantages $A_k$.
\ENDFOR
\end{algorithmic}
\end{algorithm}

%% file: safety.tex
\section{Distributional Constraint Satisfaction: Safety and Fairness}\label{sec:safety}

While the Diversity and Coherence games optimize for specific shapes
of the distribution (spread vs. point-mass), a third critical class of
problems involves \emph{Distributional Constraints}. In real-world
deployment, we often require the aggregate behavior of the model to
satisfy safety or fairness guarantees, even if individual answers are
technically correct.

Here are some examples for both types: (1) Safety/Toxicity: We may
require that the expected toxicity score of generated answers remains
below some threshold $\e$, or that the rate of refusal for benign
prompts is minimized; (2) Fairness: A medical diagnosis model may be
required to yield positive predictions at equal rates across
demographic groups (Demographic Parity). This is a constraint on the
marginal distribution of answers $\nu_\pi$, not on any single trace.

\subsection{Problem Formulation}

We formulate this as a constrained optimization problem. We wish to
minimize the divergence from the reference policy $\pi_0$ subject to
the marginal distribution $\nu_\pi$ remaining within a valid set
$\cC$:
\begin{equation}
\label{eq:constrained_primal}
\min_{\pi \in \Pi} \quad \KL(\pi \| \pi_0) \quad \text{s.t.} \quad \nu_\pi \in \cC.
\end{equation}
Typically, $\cC$ is defined by linear constraints on feature
functions $\bc(z)$ (e.g., toxicity classifiers or demographic
indicators):
\[
\cC = \left\{ \nu \in \Delta(\sZ) \mid \E_{z \sim \nu}[\bc(z)] \le \bb \right\}.
\]
In our ALFT framework, this maps to the general problem
\eqref{eq:primal} by defining the answer-level functional
$\mathcal{R}(\nu)$ as the convex indicator function of the set
$\cC$ (taking value $0$ if $\nu \in \cC$ and $+\infty$
otherwise).

\subsection{Game Interpretation: The Projection Game}

In the game-theoretic dual, the role of the Target $\sfq$ is to
act as the \emph{Corrected Distribution}.  Since $\cR$ is the
indicator of $\cC$, the optimal Target $\sfq^*$ in the game becomes
the Information Projection (I-Projection) of the current policy's
marginal $\nu_\pi$ onto the constraint set $\cC$.
\begin{equation}
\label{eq:safety_target}
\sfq^*
= \argmin_{\sfq \in \cC} \KL(\sfq \parallel \nu_\pi).
\end{equation}
Intuitively, the Target looks at the current noisy or unsafe behavior
of the policy ($\nu_\pi$) and finds the closest distribution that
satisfies the safety/fairness constraints. The Policy then updates to
mimic this safe Target.

\subsection{\algname{Safety-GRPO} algorithm}

Computing the exact projection $\sfq^*$ is difficult. However,
for linear constraints $\E[\bc] \le \bb$, the projection has a known
exponential form parametrized by Lagrange multipliers
$\lambda \in \mathbb{R}^d_{\ge 0}$:
\[
\sfq^*(z) \propto \nu_\pi(z) \exp\paren*{ -\lambda^\top \bc(z) }.
\]
Substituting this target into the GRPO reward definition
($R(y) = \log \sfq^*(\EX(y))$), we derive a computationally
efficient algorithm based on Dual Ascent.

\textbf{Reward Structure.}
The effective reward for a trace $y$ is simply the negative weighted
constraint violation:
\begin{equation}
\label{eq:safety_reward}
R_{\text{safe}}(y) = -\lambda^\top \bc(\EX(y)).
\end{equation}
Here, $\lambda$ acts as a dynamic penalty weight. The algorithm alternates between two steps:
\begin{enumerate}

\item Policy Step (GRPO): Update $\pi$ using rewards
  $R_{\text{safe}}(y)$ to minimize constraint violations.

\item Dual Step (Update $\lambda$): Update the penalty weights
  $\lambda$ via gradient ascent to enforce the constraints strictly:
  \[
    \lambda \leftarrow \max\paren*{0, \lambda
      + \alpha \paren*{ \E_{z \sim \pi}[\bc(z)] - \bb } }.
  \]
\end{enumerate}
This derivation further shows that standard approaches to constrained
RL (like Lagrangian relaxation) are, in fact, finding the Nash
Equilibrium of the Distributional Alignment Game where the Target
plays the strategy of the \emph{closest safe distribution}.

This leads to a theoretically grounded algorithm we refer to
as \algname{Safety-GRPO}.

\begin{algorithm}[H]
\caption{\algname{Safety-GRPO} (Primal-Dual)}
\label{alg:safety_grpo}
\begin{algorithmic}[1]
  \REQUIRE Dataset $\cD$, Policy $\pi$, Constraints
  $\bc(z) \le \bb$, Step size $\alpha$.
\STATE Initialize Lagrange multipliers $\lambda \leftarrow \mathbf{0}$.
\FOR{each training step}
    \STATE Sample batch $B \sim \cD$.
    \FOR{each $x \in B$}
        \STATE Sample $K$ traces $\{y_1, \dots, y_K\} \sim \pi(\cdot|x)$.
        \STATE Extract answers $z_k = \EX(y_k)$ and features $\bc(z_k)$.
        \STATE \textbf{Target Step:} Implicitly defined by penalty weights $\lambda$.
        \STATE \textbf{Reward Calculation:}
        \STATE \quad $R_k = -\lambda^\top \bc(z_k)$ \quad \textit{// Penalize constraint violations}
        \STATE Compute advantages $A_k$ via Eq. \eqref{eq:advantage}.
    \ENDFOR
    \STATE \textbf{Policy Step:} Update $\pi$ using GRPO with advantages $A_k$.
    \STATE \textbf{Dual Step:} Update penalties via gradient ascent:
    \STATE \quad $\h \bc = \frac{1}{|B| K} \sum_{x \in B} \sum_{k=1}^K \bc(z_{k})$ \quad \textit{// Estimate expected violation}
    \STATE \quad $\lambda \gets \max(0, \lambda + \alpha (\h \bc - \bb))$.
\ENDFOR
\end{algorithmic}
\end{algorithm}

%% file: experiments.tex
\section{Experimental results}
\label{sec:experiments}

\begin{table*}[t]
\centering
\small
\begin{tabular}{lccccc}
\hline
Algorithm & Model & Baseline ACC (95\% CI) & Our ACC (95\% CI) & Abs.\ $\uparrow$ (pp) & Rel.\ $\uparrow$ \\
\hline
Pairwise-GRPO & Qwen-3B   & 75.06 [72.65, 77.32] & 79.61 [77.35, 81.69] & +4.55 & +6.06\% \\
 & Llama    & 66.19 [63.59, 68.69] & 72.71 [70.24, 75.04] & +6.52 & +9.85\% \\
 & Phi-3   & 73.69 [71.25, 76.00] & 82.87 [80.74, 84.80] & +9.18 & +12.46\% \\
\hline
Coherence-GRPO & Qwen-3B   & 75.06 [72.65, 77.32] & 80.36 [78.13, 82.42] & +5.30 & +7.06\% \\
 & Llama    & 66.19 [63.59, 68.69] & 69.37 [66.83, 71.80] & +3.18 & +4.80\% \\
 & Phi-3   & 73.69 [71.25, 76.00] & 81.50 [79.32, 83.50] & +7.81 & +10.60\% \\
\hline
\end{tabular}
\caption{GSM8K improvement (greedy decoding). Absolute improvement is in percentage points (pp); relative improvement is computed as $(\text{Our ACC} - \text{base})/\text{base}$.}
\label{tab:gsm8k_grpo_results}
\end{table*}

\begin{table*}[t]
\centering
\small
\begin{tabular}{lccccccc}
\hline
Algorithm & Model & Base EM & Our EM & Rel.\ $\uparrow$ (EM) & Base F1 (95\% CI) & Our F1 (95\% CI) & Rel.\ $\uparrow$ (F1) \\
\hline
Pairwise   & Qwen & 32.95 & 35.32 & +7.19\%  & 39.70 $\pm$ 0.67 & 40.28 $\pm$ 0.69 & +1.46\% \\
               & Llama  & 39.12 & 47.85 & +22.32\% & 48.56 $\pm$ 0.66 & 53.52 $\pm$ 0.70 & +10.22\% \\
               & Phi-3  & 32.03 & 45.50 & +42.06\% & 42.76 $\pm$ 0.63 & 50.51 $\pm$ 0.70 & +18.12\% \\
\hline
Coherence  & Qwen & 32.95 & 32.90 & -0.15\% & 39.70 $\pm$ 0.67 & 39.62 $\pm$ 0.67 & -0.20\% \\
               & Llama  & 39.12 & 40.25 & +2.89\%  & 48.56 $\pm$ 0.66 & 49.35 $\pm$ 0.66 & +1.63\% \\
               & Phi-3  & 32.03 & 32.00 & -0.09\% & 42.76 $\pm$ 0.63 & 42.79 $\pm$ 0.63 & +0.07\% \\
\hline
\end{tabular}
\caption{TriviaQA results. Relative improvement is computed as $(\text{ours} - \text{base})/\text{base}$. For F1, we report the 95\% CI half-width as $\pm$. For algorithms, Pairwise and Coherence stand for Pairwise-GRPO and Coherence-GRPO, respectively. }
\label{tab:triviaqa_grpo_results}
\end{table*}

We evaluate our framework in two stages. First, we validate the
variance reduction properties of the \algname{Game-GRPO} estimator in
a controlled synthetic environment with explicit ground-truth
heterogeneity. Second, we scale the approach to Answer-Level
Self-Improvement on large language models, demonstrating significant
gains on mathematical reasoning (GSM8K) and open-ended question
answering (TriviaQA).

\input{synthetic.tex}

\subsection{Answer-Level Coherence on LLMs (GSM8K \& TriviaQA)}
\label{sec:llm_exp}

We evaluate answer-level self-improvement via coherence on GSM8K and TriviaQA datasets. We use the official train split to construct paraphrase groups for training and report performance on the official test split with $1319$ questions. Each example is formatted as an instruction-following chat prompt with a fixed system message and a user message containing the math word problem.

We test \texttt{\small Qwen2.5-3B-Instruct}, \texttt{\small Phi-3-mini-4k\-instruct}, and \texttt{Llama-3.2-3B\-Instruct}, referred to as \texttt{\small Qwen}, \texttt{\small Phi3}, and \texttt{\small Llama}, respectively. We generate one paraphrase for each question in the training set with \texttt{\small Qwen} and greedy decoding. 

For all models, we fine-tune with QLoRA (4-bit NF4 quantization with double-quantization; compute dtype bfloat16 on GPU) and attach a LoRA adapter with $r=8$, $\alpha=16$, dropout 0.05 (target modules auto-detected among the usual projection layers).

For GSM8K, training uses Pairwise-GRPO on paired prompts (original + paraphrase) with 2 pairs per step and 32 rollouts per prompt. We set KL weight to 0.02, learning rate to $1e-5$, and max completion length to 512. We train for 3200 steps, disable batch dispatch/splitting in the accelerator. 
We report the results after 3200 steps in \Cref{tab:gsm8k_grpo_results}. Due to computation constraints, we use 1 pair per step and 24 rollouts per prompt for \texttt{\small Phi3}.


For TriviaQA, we trained with the same parameters. We additionally instantiate a embedder \texttt{\small SentenceTransformer}  on GPU when available, and use its embeddings to judge semantic similarity between generated answers. 
We test and report the improvement with GRPO in \Cref{tab:triviaqa_grpo_results}.

%% file: synthetic.tex
\subsection{Synthetic Validation: Variance Reduction in Many-to-One Mappings}
\label{sec:synthetic_exp}

\begin{figure}[t]
    \centering
    \includegraphics[width=\linewidth]{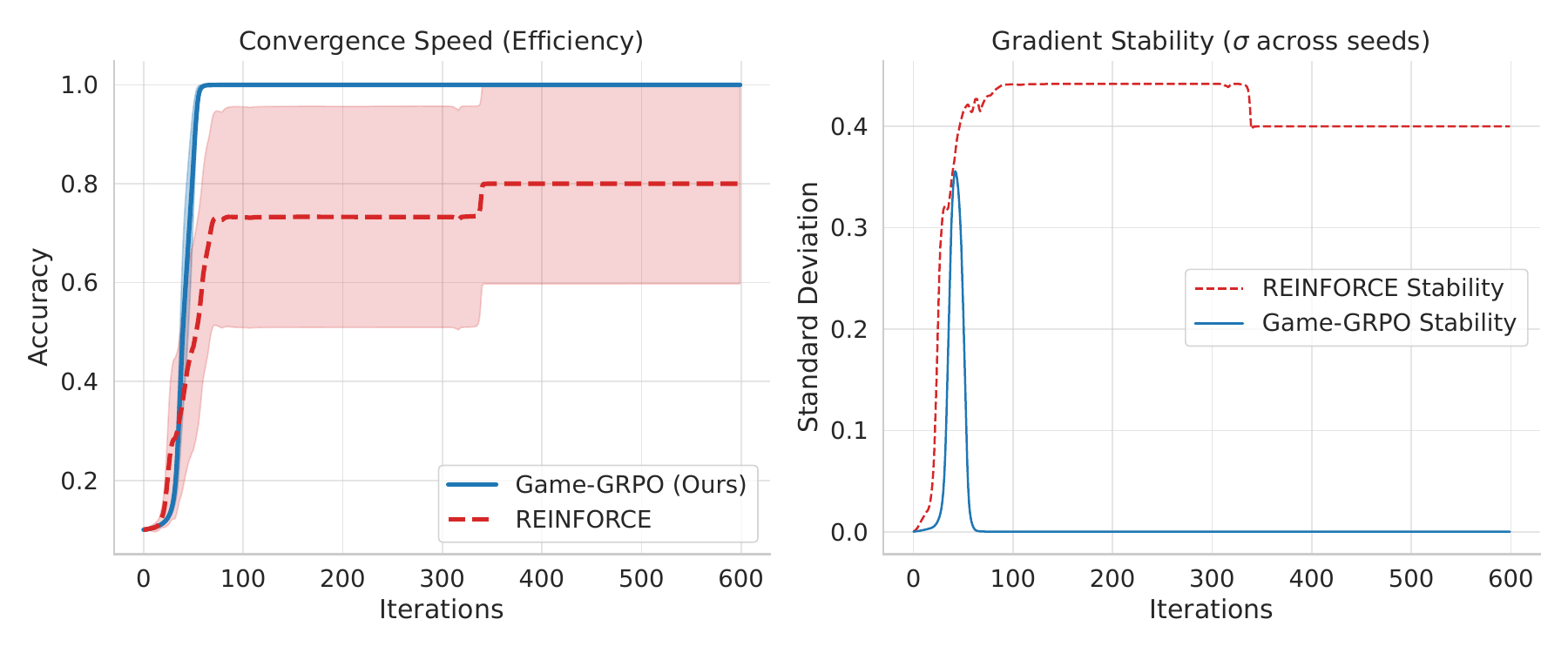}
    \caption{Robustness to Extreme Heterogeneity. We compare
      \algname{Game-GRPO} (Blue) against REINFORCE with a global
      baseline (Red) on a 1000-to-1 redundant mapping task. The
      environment switches between ``Easy'' (Bias 8.0) and ``Hard'' (Bias
      0.0) instances.  \emph{(Left)} \algname{Game-GRPO} isolates the
      true advantage signal, converging to $1.0$ accuracy. The
      baseline is overwhelmed by the $8\times$ bias, fluctuating
      around $0.8$.  \emph{(Right)} The stability plot reveals the
      mechanism: the global baseline induces massive gradient variance
      ($\sigma \approx 0.4$), whereas the group-relative baseline
      reduces variance to near zero, enabling deterministic-like
      convergence.}    \label{fig:synthetic_convergence}
\end{figure}

Before scaling to large language models, we validate our theoretical
claims in a controlled synthetic environment. A core challenge in ALFT
is the ``many-to-one'' nature of reasoning, where thousands of latent
traces $y$ map to a single answer $z$ \citep{Uesato2022}. Standard
gradient estimators struggle to assign credit in this setting,
especially when the task difficulty varies per instance (heterogeneous
difficulty).

\textbf{Experimental Setup.}
We construct a ``Redundant Mapping'' environment with a trace space
$|\sY|=10,000$ mapping to an answer space $|\sZ|=10$ (a $1000:1$
redundancy ratio). To simulate the extreme variance of real-world
prompts, we introduce a stochastic difficulty bias. At each step, the
environment is instantiated as either ``Easy'' (Base Reward $= 8.0$) or
``Hard'' (Base Reward $= 0.0$). A correct answer yields a marginal
signal of $+1.0$.  This creates a signal-to-Bias ratio of 1:8, meaning
the environmental noise is eight times larger than the learning
signal.

\textbf{Baselines.} We compare \algname{Game-GRPO} against a strong
REINFORCE baseline equipped with a global moving-average baseline.

\textbf{Results.} Figure~\ref{fig:synthetic_convergence} illustrates
the learning dynamics.
\begin{itemize}
\item \textbf{Robustness:} \algname{Game-GRPO} (Blue) effectively
  decouples the signal from the difficulty bias by computing the
  baseline dynamically from the group. It converges rapidly to optimal
  performance ($100\%$ accuracy) with negligible variance.

\item \textbf{Failure of Global Baselines:} The REINFORCE baseline
  (Red) fails to normalize the extreme heterogeneity. Because the
  global average ($\approx 4.0$) is too low for ``Easy'' tasks and too
  high for ``Hard'' tasks, the estimator suffers from catastrophic
  variance ($\sigma \approx 0.4$). It plateaus at sub-optimal
  performance ($\approx 80\%$), unable to fully resolve the credit
  assignment problem amidst the noise.

\end{itemize}

%% file: conclusion.tex
\section{Conclusion}

We introduced \emph{Distributional Alignment Games}, a variational
framework that resolves the fundamental computational bottleneck of
Answer-Level Fine-Tuning. By lifting the optimization problem from the
space of traces to a game between a Policy and a Target distribution,
we transform the intractable marginalization over reasoning paths into
a tractable projection problem.

This perspective provides more than just a computational trick; it
offers a unified theoretical lens for reasoning alignment. We have
demonstrated that distinct objectives—whether maximizing
\emph{diversity} for exploration or enforcing \emph{coherence} for
self-improvement—are merely different strategies played by the Target
distribution. This rigor allows us to ground recent heuristic
successes, such as inverse-frequency scaling and self-consistency, in
established convex duality theory.

Furthermore, we bridged the gap between theory and practice by
deriving \emph{Game-Theoretic GRPO}. We showed that the abstract
equilibrium conditions translate directly into scalable algorithms
like \emph{\algname{Coherence-GRPO}} and
\emph{\algname{Pairwise-GRPO}}, where the "reward" is simply the
log-likelihood of the optimal target. By extending this framework to
\emph{Pairwise Coherence} and \emph{Distributional Constraints}, we
have paved the way for aligning models in open-ended domains where
exact answers are undefined and safety guarantees are paramount.

%% file: appendix.tex
\input{omitted}

\section{Solving the Minimax Game via No-Regret or Best Response Dynamics}
\label{app:solving-minimax}

Crucially, the game formulation is computationally tractable.  It
offers a flexible theoretical basis that admits two distinct types of
solutions, depending on the computational constraints and the
properties of the functional $\cR$.

\paragraph{1. General Solution via No-Regret Dynamics.}  For some objectives, the game permits a convex-concave parametrization, allowing us employ techniques from no-regret learning to efficiently compute a Nash equilibriumm. Specifically, the objective  $\cG(\pi, \sfq)$ satisfies the following properties:
\begin{itemize}

\item \textbf{Strictly Convex in $\pi$}: The term $\KL(\pi \| \pi_0)$
  is strictly convex in $\pi$, while the coupling term
  $\E_{y \sim \pi}[\log \sfq(\EX(y))]$ is linear in $\pi$.

\item \textbf{Concave in Dual Variables}: The objective is concave
  with respect to the natural dual parameters $u$ (where
  $u = -\beta \log \sfq$).  While the composition with the non-linear
  mapping $u \to \sfq$ does not guarantee concavity in $\sfq$ for
  general functionals, the underlying convexity of $\cR^*$ ensures the
  problem is well-posed in the dual space.  For specific regularizers
  (e.g., entropy), concavity in $\sfq$ may also hold on the simplex.

\end{itemize}

This structure technically allows for solution via standard no-regret
dynamics (e.g., Mirror Descent or Follow-the-Regularized-Leader)
\citep{FreundSchapire1999,ShalevShwartz2012} on the dual variables
$u$.  By updating $\pi$ to minimize regret and $u$ to maximize regret,
the average iterates converge to the Nash Equilibrium at a rate of
$O(1/\sqrt{T})$. This provides a robust fallback for cases where exact
maximization is difficult.

\paragraph{2. Efficient Solution via Alternating Best Response.}
In the context of ALFT, we can adopt a faster strategy:
\emph{Alternating Best Response}.  Instead of incremental gradient
steps, we solve the inner optimization problems exactly. Crucially, we
perform the maximization directly over the target distribution $\sfq$
rather than the dual variable $u$. This is justified by the bijection
established in Section 2.2: finding the optimal
$\sfq^* \in \Delta(\sZ)$ is equivalent to finding the optimal dual
variable $u^*$ (modulo shift) that maximizes the concave dual
objective.

We thus iterate the following two steps:

\begin{enumerate}
\item \textbf{Target Step (Exact Dual Maximization):} Fix $\pi$ and
  find the optimal target $\sfq^*$.
  \[ \sfq^* = \argmax_{\sfq \in \Delta(\sZ)} \cG(\pi, \sfq). \]
  Because of the bijection between $\sfq$ and the equivalence classes
  of $u$, this step is equivalent to finding the exact maximizer of
  the dual objective $u^*$. For many functionals (e.g., entropy or
  Euclidean distance), this admits a closed-form solution (e.g.,
  centroid).

\item \textbf{Policy Step (Exact Primal Minimization):} Fix $\sfq^*$
  and update $\pi$ to match the trace-level target
  $\wt \sfq(y) \propto \pi_0(y)\sfq^*(\EX(y))$.
\[ \pi_{new} = \argmin_{\pi \in \Pi} \cG(\pi, \sfq^*). \]
This is a standard supervised learning task (KL projection).
\end{enumerate}

\textbf{Convergence.}  While alternating best response does not
converge for general games, it is guaranteed to converge for
\emph{strictly convex-concave} games where the subproblems are
solved exactly. Since our primal objective $\cJ(\pi)$ is strictly
convex (and the implied dual is strictly concave modulo shifts), this
alternating procedure converges to the unique global optimum
$(\pi^*, \sfq^*)$ \citep{Tseng2001}. This justifies our algorithmic
choice to iteratively estimate the target and project the policy.

\textbf{Connection to Variational EM.}
Structurally, our alternating best-response scheme resembles a
\emph{Variational Expectation-Maximization (EM)} procedure. The
\emph{Target Step} acts as an \emph{E-Step}, inferring the optimal
variational distribution $\sfq^*$, while the \emph{Policy Step} acts
as an \emph{M-Step}, updating $\pi$ to match these targets. However,
unlike standard EM which typically guarantees convergence only to a
stationary point, our formulation as a game derived from a convex primal
problem allows for stronger global convergence guarantees.

\section{Choice of the Consensus Target}
\label{app:consensus_target}

In the Coherence Game, the Target $\sfq^*$ serves as the ground truth
estimate that the policy attempts to match. Mathematically, this
target is the \emph{Bregman Centroid} of the distributions
$\{\nu_k\}_{k=1}^K$ implied by the sampled group. The specific choice
of divergence determines the nature of this consensus. We analyze
three candidates for $\sfq^*$: the Geometric Mean, the Arithmetic
Mean, and the Majority Vote.

\subsection{Geometric Mean (Ideal Consensus)}

A. Mathematical Definition.  For self-improvement, we seek a sharp
consensus that eliminates noise. This is formally captured by
minimizing the average KL Divergence:
\[
  \sfq^*
  = \argmin_{\sfq \in \Delta(\sZ)} \frac{1}{K} \sum_{k=1}^K \KL(\sfq \| \nu_k).
\]
The closed-form solution is the Geometric Mean:
\begin{equation}
\label{eq:geo_mean}
\sfq^*(z)
= \frac{1}{Z} \paren*{ \prod_{k=1}^K \nu_k(z) }^{\frac{1}{K}},
\quad \text{where }
Z
= \sum_{z' \in \sZ} \paren*{ \prod_{k=1}^K \nu_k(z') }^{\frac{1}{K}}.
\end{equation}

B. Approximation Status.  This is the exact solution for the strict
Coherence objective. It theoretically provides the strongest signal
for self-improvement because it enforces \emph{intersectional
  consistency}: if any trace assigns zero probability to an answer,
the consensus probability becomes zero.

C. Computational Properties.  Despite its theoretical appeal, the
Geometric Mean is \emph{computationally intractable} for two reasons:
(1) Normalization: Computing the partition function $Z$ requires
summing over the exponentially large space of all possible answers
$\sZ$; (2) Sample Sparsity: In a GRPO setting, we only observe
discrete samples $z_k$, not the full probability vectors $\nu_k$. If
we treat samples as one-hot distributions, the product term becomes
zero everywhere unless the group is unanimous, rendering the mean
degenerate.

D. Statistical Properties.  The Geometric Mean has the property of
\emph{Veto Power}. It acts as a strict filter, keeping only answers
supported by \emph{all} reasoning paths. While this removes
hallucinations effectively, it can be overly aggressive in early
training, leading to signal collapse if the model is not yet
consistent.

\subsection{The Arithmetic Mean (Robust Relaxation)}

A. Mathematical Definition.
The Arithmetic Mean corresponds to the standard mixture distribution:
\[
\ov \nu(z) = \frac{1}{K} \sum_{k=1}^K \nu_k(z).
\]
It is the unique minimizer for both the \emph{Forward KL Divergence}
($\sum \KL(\nu_k \| \sfq)$) and the Squared Euclidean Distance.

B. Approximation Status.  This is a relaxed solution. While it
minimizes a different divergence than the strict mode-seeking Reverse
KL, it preserves the support of the distribution.
\begin{itemize}
\item \textbf{Theoretical Guarantee:} As shown in
  Appendix~\ref{app:arithmetic_approximation}, iterative projection
  onto the Arithmetic Mean guarantees convergence to a coherent
  equilibrium, provided the initial policy possesses sufficient
  diversity to cover the true answer.
\end{itemize}

C. Computational Properties.  The Arithmetic Mean is highly
tractable. For discrete samples, it is simply the empirical frequency
distribution:
\[
\sfq^*(z) = \frac{\text{Count}(z \in G)}{K}.
\]
This allows for a direct implementation in GRPO using a ``Soft
Consistency'' reward:
\[
R_i = \log \paren*{ \frac{\text{Count}(z_i)}{K} }.
\]
This reward is granular, distinguishing between strong consensus
(e.g., 90\%) and weak consensus (e.g., 40\%).

D. Statistical Properties.  The Arithmetic Mean enforces \emph{Union
  Consistency}: it preserves any answer generated by at least one
group member. However, it admits a blurring effect issue: a key
statistical downside is that averaging distributions \emph{increases}
entropy (by Jensen's inequality). While we want the model to sharpen
(become more confident), the Arithmetic Mean target effectively asks
the model to cover the uncertainty of the group. This blurring
must be counteracted by the inherent sharpening of the model's
generation process (e.g., $\argmax$ decoding).

\subsection{ Majority Vote (Hard Approximation)}

A. Mathematical Definition.  The Majority Vote (or Mode) is the Dirac
distribution centered at the most frequent answer in the group:
\[
  \sfq^*(z)
  = \Ind\bracket*{ z
    = \argmax_{z'} \paren*{ \frac{1}{K} \sum_{k=1}^K \Ind[z_k = z'] } }.
\]

B. Approximation Status.  This is a \emph{heuristic approximation}. It
can be viewed as computing the Arithmetic Mean (step 1) and then
applying a hard $\argmax$ operator (step 2) to force the distribution
to collapse. This makes it an even coarser approximation than the
Arithmetic Mean, discarding all information about secondary
candidates.

C. Computational Properties. This is \emph{trivially tractable} and is
the basis of our \emph{\algname{Coherence-GRPO}} algorithm. It yields
a binary reward signal ($1$ for the majority, $0$ otherwise), which
simplifies the optimization landscape but loses the nuance of the soft
probabilities used in the Arithmetic Mean.

D. Statistical Properties.  The Mode acts as a hard sharpener. It
explicitly solves the entropy problem of the Arithmetic Mean by
forcing low entropy. However, for small group sizes $K$, the sample
mode is a high-variance estimator. If the true distribution is
multimodal or flat, the sample mode may jump randomly between classes,
introducing noise into the reward signal.

\subsection{Stability analysis via generalized squared Hellinger distance}
\label{app:arithmetic_approximation}

While the Arithmetic Mean minimizes a different divergence than the
ideal Geometric Mean, we now prove that it serves as a valid
approximation with bounded error. We analyze the performance gap using
the \emph{Generalized Squared Hellinger (GSH) Distance}.

\ignore{
We analyze the performance gap between the theoretical two-step $\KL$
projection and the Hybrid Double-Step Projection (HDSP)
Algorithm~\ref{alg:hdsp}, where in the first projection step the
Euclidean Squared distance is used instead of $\KL$, that is the
arithmetic mean instead of the normalized geometric mean.
}

\begin{definition}[Generalized Squared Hellinger Distance]
  Let $\cS_x = \{\sfp_1, \dots, \sfp_K\}$ be the set of output probability
  distributions generated by $K$ sampled traces for an input $x$. The
  Generalized Squared Hellinger (GSH) Distance is defined as the
  defect in the normalization of the geometric mean:
\[
  H^2(\sfp_1, \ldots, \sfp_K)
  = 1 - \sum_{y \in \sY} \paren*{ \prod_{k=1}^K \sfp_k(y) }^{1/K}
\]
\end{definition}
Note that $H^2(\{\sfp_k\})$ is always in $[0, 1]$. GSH generalizes the
standard squared Hellinger distance to $K$ distributions by
considering the defect in the Generalized Bhattacharyya Coefficient
\citep{toussaint1974probability}.  It is a measure of diversity of
distributions. It is small when distributions are similar (low
diversity) and large when they differ (high diversity). It can be
bounded in terms of JS divergence, which is also a measure of
diversity.

Let the improvement of a solution $\pi$ be defined as
\[
 \Improv(\pi) = \E\bracket*{\sfD_F(\pi^*(x) \parallel \pi_0(x))} -
 \E\bracket*{\sfD_F(\pi^*(x) \parallel \pi(x))}.
\]

\begin{theorem}[Improvement Guarantee for Hybrid Projection]
\label{th:hybrid-stability-hellinger}
Let $\ov \pi_{GM}$ denote the normalized geometric mean of $\cS_x$ and
$\ov \pi_{AM}$ denote the arithmetic mean of $\cS_x$.  Let
$\h \pi_{Pure}$ and $\h \pi_{Hybrid}$ be the projections of
$\ov \pi_{GM}$ and $\ov \pi_{AM}$ onto $\Pi$ respectively.

Assume the evaluation function $\ell(\pi) = \KL(\pi^* \parallel \pi)$
is $L$-Lipschitz continuous with respect to the $L_2$
norm. Furthermore, assume the log-likelihood gradient of the model is
bounded such that $\|\nabla \log \pi(y)\|_2 \leq B$ for all $y$.
Then, the improvement degradation of the Hybrid algorithm is bounded by:
\[
  \Improv(\h \pi_{Hybrid})
  \geq \Improv(\h \pi_{Pure})
  - 2 L B \, \E_{x \sim \sD_\sX} \bracket*{ H^2(\{\sfp_k\}) }
\]
\end{theorem}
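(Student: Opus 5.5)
Since the baseline term $\E[\sfD_F(\pi^*(x)\parallel\pi_0(x))]$ is common to both improvements, the claim is equivalent to
\[
\E_x\bigl[\ell(\h\pi_{Hybrid}(x))\bigr] \le \E_x\bigl[\ell(\h\pi_{Pure}(x))\bigr] + 2LB\,\E_x\bigl[H^2(\{\sfp_k\})\bigr],
\qquad \ell(\pi)=\KL(\pi^*\parallel\pi).
\]
Because $\ell$ is $L$-Lipschitz in $L_2$, this reduces pointwise in $x$ to showing
\[
\norm{\h\pi_{Hybrid}(x)-\h\pi_{Pure}(x)}_2 \le 2B\,H^2(\{\sfp_k\}).
\]
The argument then has two parts. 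First, bound the distance between the two targets $\ov\pi_{AM}$ and $\ov\pi_{GM}$ in terms of $H^2$. Second, show that projection onto $\Pi$ transfers this distance to the projected policies with a factor controlled by $B$.

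\textbf{Step 1 (target gap).} Write $G(y)=(\prod_k\sfp_k(y))^{1/K}$ for the unnormalized geometric mean, so that $Z=\sum_y G(y)=1-H^2$ and $\ov\pi_{GM}=G/Z$. By AM--GM, $\ov\pi_{AM}(y)-G(y)\ge 0$ pointwise, and
\[
\sum_y\bigl(\ov\pi_{AM}(y)-G(y)\bigr)=1-Z=H^2,
\qquad\text{hence}\qquad
\norm{\ov\pi_{AM}-G}_1=H^2 .
\]
Normalization adds at most another $H^2$:
\[
\norm{G-\ov\pi_{GM}}_1=\Bigl(\tfrac1Z-1\Bigr)Z=1-Z=H^2 .
\]
Together these give $\norm{\ov\pi_{AM}-\ov\pi_{GM}}_1\le 2H^2$, and since $\norm{\cdot}_2\le\norm{\cdot}_1$ the same bound holds in $L_2$. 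This is where the factor $2$ comes from.

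\textbf{Step 2 (stability of the projection).} I would assume $\Pi$ is a parametric family $\pi_\theta$ and that the projection of a target $\sfq$ is characterized by the first-order condition
\[
\nabla_\theta\,\KL(\sfq\parallel\pi_\theta)=-\sum_y \sfq(y)\,\nabla_\theta\log\pi_\theta(y)=0 .
\]
Swapping the target from $\ov\pi_{GM}$ to $\ov\pi_{AM}$ perturbs this stationarity equation by
\[
\sum_y\bigl(\ov\pi_{AM}(y)-\ov\pi_{GM}(y)\bigr)\nabla\log\pi(y),
\]
whose norm is at most $B\,\norm{\ov\pi_{AM}-\ov\pi_{GM}}_1\le 2BH^2$ by the gradient bound. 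I would turn this perturbation of the optimality condition into a displacement of the projected policy, yielding $\norm{\h\pi_{Hybrid}-\h\pi_{Pure}}_2\le 2BH^2$. Combining with the Lipschitz property of $\ell$ and averaging over $x$ gives the stated bound.

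\textbf{Main obstacle.} Step 2 is the hard part. Converting a perturbation of the first-order condition into a displacement bound normally needs strong convexity, i.e.\ a lower-bounded Fisher information, and that would introduce a curvature constant. The stated bound has none, so the paper must be implicitly using unit curvature, or reading $B$ as a Lipschitz constant of the projection map. My plan would be to state that normalization explicitly as part of the assumption on $\Pi$. Alternatively, I would work directly in a parametrization where the projection is $1$-Lipschitz in $L_1$ scaled by $B$, for example by applying a mean-value argument along the segment $\sfq_t=(1-t)\ov\pi_{GM}+t\ov\pi_{AM}$. Steps 1 and the final combination are routine by comparison.
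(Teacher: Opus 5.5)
\textbf{Verdict.} Your route is the paper's route. The step you defer is one the paper also does not actually prove, and it cannot be proved from the stated hypotheses.

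\textbf{Comparison with the paper.} Your argument matches the paper's proof step for step. Both use the Lipschitz reduction and the identical $L_1$ target-gap bound $\|\ov\pi_{AM}-\ov\pi_{GM}\|_1\le 2H^2$ through the unnormalized geometric mean $G$. Both then use a stability step in which swapping the target perturbs the gradient of $-\sum_y\sfq(y)\log\pi(y)$ by at most $B\|\ov\pi_{AM}-\ov\pi_{GM}\|_1$.

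The paper closes Step 2 exactly where you hesitate. It asserts that $\pi\mapsto\KL(\sfq\parallel\pi)$ is $1$-strongly convex in $L_1$ ``by Pinsker's inequality'', then applies the perturbation lemma $\mu\|x_1-x_2\|\le\|\nabla f_1(x_2)-\nabla f_2(x_2)\|$ with $\mu=1$. That justification is not valid:
\begin{itemize}
\item Pinsker says that negative entropy, i.e.\ $\KL$ in its \emph{first} argument, is $1$-strongly convex.
\item In the second argument, Pinsker only gives quadratic growth around the unconstrained minimizer $\pi=\sfq$, which is useless when $\sfq\notin\Pi$.
\item The true curvature $\mathrm{diag}(\sfq(y)/\pi(y)^2)$ can be arbitrarily small on $\Pi$.
\end{itemize}

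\textbf{Counterexample.} Take $\sY=\{1,2,3\}$, $\Pi=\{(t,\,0.96-t,\,0.04): t\in[0.02,0.94]\}$, $K=2$, and $\sfp_{1,2}=\bigl((2\pm\sqrt3)\delta,\,\delta,\,1-(3\pm\sqrt3)\delta\bigr)$.
\begin{itemize}
\item The KL projection onto $\Pi$ is $t^*=0.96\,\sfq(1)/(\sfq(1)+\sfq(2))$.
\item The ratio $\sfq(1)/\sfq(2)$ is $2$ for $\ov\pi_{AM}$ and $1$ for $\ov\pi_{GM}$. Hence $\h\pi_{Hybrid}=(0.64,0.32,0.04)$ and $\h\pi_{Pure}=(0.48,0.48,0.04)$ for every $\delta$.
\item With $\pi^*=\h\pi_{Pure}$, the degradation is $0.48\log(9/8)>0$.
\item Meanwhile $L\le 24$ on $\Pi$, and $B\le 50$ whether $\nabla\log\pi$ is taken in $t$ or in $\pi$.
\item Finally $H^2=\delta+O(\delta^2)\to 0$, while the curvature along $\Pi$ is only $O(\delta)$.
\end{itemize}
So the obstacle you flagged is a genuine gap in the statement itself, and an explicit curvature hypothesis, as you propose, is necessary.

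\textbf{Keep the three ingredients in one space.} When you add the hypothesis, the curvature, the Lipschitz condition on $\ell$, and the displacement must all live in the same space.
\begin{itemize}
\item \emph{Parameter space.} Assume $\theta\mapsto\KL(\sfq\parallel\pi_\theta)$ is $1$-strongly convex on a convex parameter set, and read the Lipschitz condition on $\ell$ in $\theta$. The perturbation lemma then gives $\|\theta_{Hybrid}-\theta_{Pure}\|_2\le\bigl\|\sum_y(\ov\pi_{AM}-\ov\pi_{GM})(y)\,\nabla_\theta\log\pi_{\theta_{Pure}}(y)\bigr\|_2\le 2BH^2$, and the bound follows as stated.
\item \emph{Mixed spaces (your current reduction).} Your reduction asks for a bound on the distance between the projected \emph{distributions}. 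Converting a parameter displacement into that costs a second factor of $B$, since $\|\pi_\theta-\pi_{\theta'}\|_1\le B\|\theta-\theta'\|_2$. With curvature modulus $\mu$ in $\theta$, your Step 2 therefore yields $2B^2H^2/\mu$, not $2BH^2$.
\item \emph{Distribution space.} To get a single factor of $B$ there, do what the paper implicitly does. Take gradients in $\pi$, where the perturbation is $(\ov\pi_{AM}(y)-\ov\pi_{GM}(y))/\pi(y)$ and $B$ plays the role of $\max_y 1/\pi(y)$. Then assume $1$-strong convexity of $\pi\mapsto\KL(\ov\pi_{AM}\parallel\pi)$ on $\Pi$ in $L_2$.
\end{itemize}
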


\begin{proof}
  We seek to bound
  $\Delta = \Improv(\h \pi_{Pure}) - \Improv(\h \pi_{Hybrid})$.  By
  the $L$-Lipschitzness assumption of the improvement function, we
  have
\begin{equation}
\Delta \leq L \, \E_x \bracket*{ \norm*{ \h \pi_{Hybrid}(x) - \h \pi_{Pure}(x) }_2 }.
\end{equation}
For any fixed distribution $\sfq$,
$\pi \mapsto \KL(\sfq \parallel \pi)$ is known to be strongly convex
with respect to $L_1$ (Pinsker's inequality) and therefore also $L_2$,
since norm-1 upper bounds norm-2. Since both $\h \pi_{Hybrid}$ and
$\h \pi_{Pure}$ are minimizers, by standard stability results for
strongly convex objectives, the parameter distance is bounded by
the difference in gradients:
\[
  \| \h \pi_{Hybrid} - \h \pi_{Pure} \|_2
  \leq \| \nabla \mathcal{L}(\ov \pi_{AM}) - \nabla \mathcal{L}(\ov \pi_{GM}) \|_2
\]
Since the loss $\mathcal{L}(q, \pi) = -\sum_y \sfq(y) \log \pi(y)$ is
linear in $\sfq$, and assuming bounded gradients
$\|\nabla \log \pi(y)\|_2 \leq B$, we have:
\begin{equation}
  \| \nabla \mathcal{L}(\ov \pi_{AM}) - \nabla \mathcal{L}(\ov \pi_{GM}) \|_2
  \leq B \cdot \| \ov \pi_{AM} - \ov \pi_{GM} \|_1.
\end{equation}
We now bound the $L_1$ distance using the triangle inequality. Let
$G(y) = \paren*{\prod_{k = 1}^K \sfp_k(y)}^{1/K}$ be the unnormalized
geometric mean and $Z = 1 - H^2(\sfp_1, \ldots, \sfp_K)$ be its
sum. Note that $\ov \pi_{GM} = G/Z$.  Using the triangle inequality
via $G$:
\begin{align*}
  \| \ov \pi_{AM} - \ov \pi_{GM} \|_1
  & \leq \| \ov \pi_{AM} - G \|_1 + \| G - \ov \pi_{GM} \|_1\\
  & = \sum (\ov \pi_{AM} - G) + \sum |G - G/Z|
  \tag{by the AM-GM inequality $\ov \pi_{AM} \geq G$ and $\ov \pi_{GM} = G/Z$}\\
  & = (1 - Z) + Z(1/Z - 1)
  \tag{$(1 - Z) \geq 0$ and $1/Z - 1 \geq 0$}\\
  & = 2(1 - Z) = 2 H^2(\{\sfp_k\}).
\end{align*}
Thus, we have
\[
\Delta \leq LB \, \E_x [ 2 H^2(\{\sfp_k\}) ].
\]
Rearranging terms completes the proof.
\end{proof}

\textbf{Interpretation and Significance.}
Theorem \ref{th:hybrid-stability-hellinger} establishes an improvement
guarantee for the \emph{hybrid} algorithm where, instead of the
geometric mean (solution when $\KL$ is used in the the first step of
the double-step self-imporvement via coherence) the arithmetic mean is
used (solution when the squared-distance is used as a Bregman
divergence).

The term $H^2(\{\sfp_k\})$ measures the disagreement within the
specific batch of $K$ outputs generated for input $x$: if the sampled
traces yield answer distributions that are not too dissimilar (low
empirical diversity), $H^2$ is close to $0$, and the improvement
achieved by the algorithm is close to the full theoretical
improvement.  Thus, the theorem shows that the algorithmic use of the
arithmetic mean is a valid approximation of the theoretical geometric
mean when the empirical diversity is not too large on average.
Crucially, the bound scales linearly with this sample diversity,
ensuring the method remains stable even when the sampled evidence is
moderately incoherent.

\textbf{Remarks on Assumptions.}
The theorem relies on two standard regularity conditions. First, the
assumption that the evaluation function is $L$-Lipschitz is necessary
to bound the impact of parameter deviations on the final loss; while
the gradient $\nabla \KL$ is unbounded at the boundary of the simplex,
this condition holds in practice for models using softmax
parameterization with bounded logits (e.g., via weight decay or
explicit clamping). Second, the assumption of bounded log-likelihood
gradients ($B$) is a standard smoothness property of neural networks,
ensuring that small shifts in the target distribution do not require
arbitrarily large parameter updates. Together, these conditions define
a standard \emph{well-behaved} regime where the stability of the
projection is guaranteed.

\section{Extended Background on Trace-Level Coherence}
\label{app:trace_level_coherence}

In this appendix, we detail the theoretical foundations of trace-level
self-improvement as established in \citet{MohriSchneiderWu2025}. This
framework serves as the precursor to the Answer-Level Coherence Game
introduced in Section 5.

\subsection{Problem Formulation}

Let $\pi: \mathcal{X} \to \Delta(\mathcal{Y})$ be a conditional policy
mapping inputs to distributions over reasoning traces. We assume an
\emph{invariance mapping} $\Phi: \mathcal{X} \to \mathcal{X}$
(typically an involution, $\Phi(\Phi(x)) = x$) that preserves the
semantic meaning of the input.  A policy is said to be
\textbf{coherent} if it is invariant under this mapping:
\[
\pi(y|x) = \pi(y|\Phi(x)) \quad \forall x \in \mathcal{X}, y \in \mathcal{Y}.
\]
We define $\mathcal{C}_{\text{coh}}$ as the set of all such coherent
policies. The goal of self-improvement is to project the initial
policy $\pi_0$ onto $\mathcal{C}_{\text{coh}}$ using a Bregman
divergence $\sfD_F$ (e.g., KL divergence or Squared Euclidean distance).

\subsection{Projection Mechanisms}

\citet{MohriSchneiderWu2025} analyze two mechanisms for solving this
problem:

\begin{enumerate}
\item \textbf{Direct Projection:} The policy is projected directly
  onto the intersection of coherent models and the feasible model
  class $\Pi$:
    \[
      \h \pi
      = \argmin_{\pi \in \Pi \cap \mathcal{C}_{\text{coh}}}
      \E_x [ \sfD_F(\pi(\cdot|x) \| \pi_0(\cdot|x)) ].
    \]
    
  \item \textbf{Two-Step Projection:} This relaxes the feasibility
    constraint.
    \begin{itemize}
    \item \emph{Step 1 (Consensus):} Project $\pi_0$ onto the
      unconstrained space of coherent distributions
      $\mathcal{C}_{\text{coh}}^\dagger$. For KL divergence, this
      corresponds to computing the \emph{Geometric Mean} of the
      distributions in the orbit
      $\{ \pi_0(\cdot|x), \pi_0(\cdot|\Phi(x)) \}$.
    \item \emph{Step 2 (Distillation):} Project the consensus
      distribution back onto the valid model space $\Pi$ (typically
      via supervised fine-tuning).
    \end{itemize}
\end{enumerate}
A key result (Theorem 5.7 in \citet{MohriSchneiderWu2025}) establishes
that for a broad class of divergences (including KL and Euclidean),
these two mechanisms are equivalent: the global projection coincides
with the two-step procedure.

\subsection{Theoretical Guarantees}

The central guarantee of this framework is the \emph{Pythagorean
  Improvement Theorem}. If the ideal ground-truth policy $\pi^*$ is
coherent (i.e., $\pi^* \in \mathcal{C}_{\text{coh}}$), then the
projected policy $\h \pi$ satisfies:
\[
  \E_x [ \sfD_F(\pi^* \| \h \pi) ] \le \E_x [ \sfD_F(\pi^* \| \pi_0) ]
  - \E_x [ \sfD_F(\h \pi \| \pi_0) ].
\]
This inequality ensures that the projection strictly reduces the
distance to the optimum. Intuitively, by eliminating the "incoherent
noise" component of the error (the distance
$\sfD_F(\h \pi \| \pi_0)$), the model necessarily moves closer to the
true distribution.

%% file: omitted.tex
\section{Omitted Proofs}\label{sec:omitted}

\subsection{Proof of Theorem~\ref{thm:consistency}}

\begin{proof}[Proof of Theorem~\ref{thm:consistency}]
  Equivalence: This follows directly from the Fenchel Duality
  Theorem. Since $\cR$ is convex and l.s.c.,
  $\cR(\nu_\pi) = \sup_\sfq (-\beta \langle \nu_\pi, \log
  \sfq \rangle - \Psi(\sfq))$. Substituting this into the
  definition of $\cJ(\pi)$ yields
  $\min_\pi \max_\sfq \cG(\pi, \sfq)$.

  Optimal Policy Form: Consider minimizing
  $\cG(\pi, \sfq)$ with respect to $\pi$ for a fixed
  $\sfq$. The relevant terms are:
\[
  \min_{\pi \in \Pi} \E_x \bracket*{ \beta \E_{y \sim \pi} \bracket*{\log \frac{\pi(y)}{\pi_0(y)}}
    - \beta \E_{y \sim \pi} [\log \sfq(\EX(y))] }
= \min_{\pi \in \Pi} \beta \E_x \bracket*{\E_{y \sim \pi}
\bracket*{ \log \frac{\pi(y)}{\pi_0(y) \sfq(\EX(y))} }}.
\]
This is equivalent to minimizing $\KL(\pi \parallel \wt \sfq)$,
where $\wt \sfq(y) \propto \pi_0(y)\sfq(\EX(y))$.
\end{proof}

%% file: alft.bib
@book{Rockafellar1996,
Author = {Rockafellar, R. Tyrrell},
Title = {Convex analysis},
Publisher = {Princeton University Press},
Year = {1997}}

@article{toussaint1974probability,
  title={Probability of error, expected divergence, and the affinity of several distributions},
  author={Toussaint, Godfried T},
  journal={IEEE Transactions on Systems, Man, and Cybernetics},
  volume={SMC-4},
  number={5},
  pages={485--488},
  year={1974},
  publisher={IEEE}
}

@article{MohriSchneiderWu2025,
  author       = {Mehryar Mohri and
                  Jon Schneider and
                  Yifan Wu},
  title        = {Coherence Mechanisms for Provable Self-Improvement},
  journal      = {CoRR},
  volume       = {abs/2511.08440},
  year         = {2025},
  url          = {https://doi.org/10.48550/arXiv.2511.08440},
  doi          = {10.48550/ARXIV.2511.08440},
  eprinttype    = {arXiv},
  eprint       = {2511.08440},
  bibsource    = {dblp computer science bibliography, https://dblp.org}
}

@article{FreundSchapire1999,
  title={Adaptive game playing using multiplicative weights},
  author={Freund, Yoav and Schapire, Robert E},
  journal={Games and Economic Behavior},
  volume={29},
  number={1-2},
  pages={79--103},
  year={1999},
  publisher={Elsevier}
}

@article{ShalevShwartz2012,
  title={Online learning and online convex optimization},
  author={Shalev-Shwartz, Shai},
  journal={Foundations and Trends{\textregistered} in Machine Learning},
  volume={4},
  number={2},
  pages={107--194},
  year={2012},
  publisher={Now Publishers, Inc.}
}

@inproceedings{Wei2022,
  title={Chain-of-Thought Prompting Elicits Reasoning in Large Language Models},
  author={Wei, Jason and Wang, Xuezhi and Schuurmans, Dale and Bosma, Maarten and Xia, Fei and Chi, Ed and Le, Quoc V and Zhou, Denny},
  booktitle={Advances in Neural Information Processing Systems (NeurIPS)},
  volume={35},
  pages={24824--24837},
  year={2022}
}

@article{Lightman2023,
  title={Let's Verify Step by Step},
  author={Lightman, Hunter and Kosaraju, Vineet and Burda, Yura and Edwards, Harri and Baker, Bowen and Lee, Teddy and Leike, Jan and Schulman, John and Sutskever, Ilya and Cobbe, Karl},
  journal={arXiv preprint arXiv:2305.20050},
  year={2023}
}

@inproceedings{Wang2022,
  title={Self-Consistency Improves Chain of Thought Reasoning in Language Models},
  author={Wang, Xuezhi and Wei, Jason and Schuurmans, Dale and Le, Quoc V and Chi, Ed H and Narang, Sharan and Chowdhery, Aakanksha and Zhou, Denny},
  booktitle={The Eleventh International Conference on Learning Representations (ICLR)},
  year={2023},
  note={Preprint arXiv:2203.11171 (2022)}
}

@inproceedings{Rafailov2023,
  title={Direct Preference Optimization: Your Language Model is Secretly a Reward Model},
  author={Rafailov, Rafael and Sharma, Archit and Mitchell, Eric and Manning, Christopher D and Ermon, Stefano and Finn, Chelsea},
  booktitle={Advances in Neural Information Processing Systems (NeurIPS)},
  volume={36},
  pages={53728--53741},
  year={2023}
}

@article{Li2025,
  title={Jointly Reinforcing Diversity and Quality in Language Model Generations},
  author={Li, Tian and Zhang, Yifan and Yu, P and Saha, S and Khashabi, D and Weston, J and Lanchantin, J and Wang, T},
  journal={arXiv preprint arXiv:2509.02534},
  year={2025},
  url={https://arxiv.org/abs/2509.02534}
}

@article{Shao2024,
  title={{DeepSeekMath}: Pushing the Limits of Mathematical Reasoning in Open Language Models},
  author={Shao, Zhihong and Wang, Peiyi and Zhu, Qihao and Xu, Runxin and Song, Junxiao and Zhang, Mingchuan and Li, Y.K. and Wu, Y. and Guo, Daya},
  journal={arXiv preprint arXiv:2402.03300},
  year={2024}
}

@book{CoverThomas2006,
  title={Elements of Information Theory},
  author={Cover, Thomas M. and Thomas, Joy A.},
  year={2006},
  publisher={Wiley-Interscience},
  edition={2nd},
  address={Hoboken, NJ}
}

@article{Tseng2001,
  title={Convergence of a Block Coordinate Descent Method for Nondifferentiable Minimization},
  author={Tseng, Paul},
  journal={Journal of Optimization Theory and Applications},
  volume={109},
  number={3},
  pages={475--494},
  year={2001},
  publisher={Springer}
}

@article{Uesato2022,
  title={Solving math word problems with process- and outcome-based feedback},
  author={Uesato, Jonathan and Kushman, Nate and Kumar, Ramana and Song, Francis and Siegel, Noah and Wang, Lisa and Creswell, Antonia and Irving, Geoffrey and Higgins, Irina},
  journal={arXiv preprint arXiv:2211.14275},
  year={2022}
}
